\newtheorem{assumption}{A}
\newtheorem{theorem}{Theorem}
\newtheorem{lemma}{Lemma}
\def\eqref#1{equation~\ref{#1}}
\def\Algref#1{Algorithm~\ref{#1}}
\def\1{\bm{1}}
\def\vtheta{{\bm{\theta}}}
\def\vg{{\bm{g}}}
\def\vx{{\bm{x}}}
\def\vy{{\bm{y}}}
\DeclareMathAlphabet{\mathsfit}{\encodingdefault}{\sfdefault}{m}{sl}
\SetMathAlphabet{\mathsfit}{bold}{\encodingdefault}{\sfdefault}{bx}{n}
\def\gL{{\mathcal{L}}}
\def\sS{{\mathbb{S}}}
\newcommand{\E}{\mathbb{E}}
\DeclareMathOperator*{\argmin}{arg\,min}
\title{Iterative Foundation Model Fine-Tuning \\on Multiple Rewards}
\author{%
  Pouya M.~Ghari \\
  Biogen \\
  \And
  Simone Sciabola \\
  Biogen \\
  \And
  Ye Wang\thanks{Corresponding Author: \texttt{ye.wang@biogen.com}} \\
  Biogen \\
}
\begin{document}

\maketitle

\begin{abstract}
  Fine-tuning foundation models has emerged as a powerful approach for generating objects with specific desired properties. Reinforcement learning (RL) provides an effective framework for this purpose, enabling models to generate outputs that maximize a given reward function. However, in many applications such as text generation and drug discovery, it can be suboptimal to optimize using a single reward signal, as multiple evaluation criteria are often necessary. This paper proposes a novel reinforcement learning-based method for fine-tuning foundation models using multiple reward signals. By employing an iterative fine-tuning strategy across these rewards, our approach generalizes state-of-the-art RL-based methods. We further provide a theoretical analysis that offers insights into the performance of multi-reward RL fine-tuning. Experimental results across diverse domains including text, biological sequence, and small molecule generation, demonstrate the effectiveness of the proposed algorithm compared to state-of-the-art baselines.
\end{abstract}

\section{Introduction}
Foundation models have emerged as powerful tools capable of performing a wide range of tasks. Trained on large-scale datasets, they acquire broad knowledge that enables their application across diverse domains. To better align a foundation model with the specific preferences of a downstream task, fine-tuning can be applied to improve both performance and task alignment. Given access to a reward model or a preference dataset, reinforcement learning offers an effective framework for fine-tuning foundation models and large language models (LLMs) to better align with downstream tasks \cite{PPO,Rafailov2023,Ahmadian2024}. Preference criteria used to evaluate the quality of responses generated by LLMs can vary, and in some cases, it may not be possible to derive a single reward or preference. Furthermore, these criteria can sometimes conflict with one another, making it difficult to summarize them into a single, unified preference metric. For example, human preferences can be diverse and conflicting with one another, such as the trade-off between harmlessness and helpfulness \cite{Bai2022}. As another example, LLMs can be used to generate novel small molecules for drug design \cite{wang2023cmolgpt, Jablonka2024, Liu2024}. In such applications, candidate molecules are often evaluated based on multiple criteria \cite{Jain2023,Ran2025}. In such cases, fine-tuning foundation models on multiple objectives becomes essential.

Multi-objective reinforcement learning can be employed to address diverse rewards and preferences. Existing methods in the literature primarily follow two approaches. The first approach combines all reward signals corresponding to different objectives into a single scalar reward \cite{Li2020}, which is then used to fine-tune the foundation model. The second approach involves fine-tuning the foundation model separately and \emph{independently} for each objective to obtain a set of expert policy networks, each specialized for a specific objective. These expert policies are then merged to form a unified policy \cite{Rame2023}, effectively acting as an ensemble with the aim of capturing knowledge from all experts. However, combining reward signals into a single objective may prevent the model from learning objective-specific skills. This can result in high performance variance across objectives, particularly when a minority of objectives conflict with a majority that are more similar. On the other hand, merging expert policies into a single policy may lead to suboptimal performance across some or all objectives, especially when there is significant divergence among the expert policies due to conflicting objectives.

This paper introduces a novel multi-objective reinforcement learning method for fine-tuning foundation models. To enable the model to acquire objective-specific skills, the proposed algorithm fine-tunes the foundation model separately for each objective, resulting in an expert policy network for each one. However, this fine-tuning is \emph{not} performed independently. To control variance among the expert policies, the algorithm breaks the fine-tuning process into smaller steps and performs it \emph{iteratively}. After each step, the expert policies are merged into a single policy, which is then used as the starting point for the next round of objective-specific fine-tuning. We show that the proposed method can be interpreted as a generalization of both reward-combining and expert-policy-merging approaches. Furthermore, we analyze the convergence properties of the algorithm, providing theoretical insights into its performance. The contributions of this paper are summarized as follows:
\begin{itemize}
\item We propose a novel and generalized algorithm that offers greater flexibility than reward-combining and expert-merging baselines, leading to improved performance.
\item We provide a theoretical analysis of the proposed algorithm, offering insights into its properties.
\item We conduct experiments across diverse tasks including small molecule design, DNA sequence generation, and text summarization, to demonstrate the effectiveness of the proposed method.
\end{itemize}

\section{Related Works}
\textbf{RLHF.} Reinforcement learning with human feedback (RLHF) has been extensively studied in the literature and has demonstrated promising results across various applications \cite{Lee2023,Wang2023,Cideron2024,Hejna2024}. In the context of aligning foundation models with human preferences, RLHF emerges as a compelling approach, as it enables the model to interact with humans feedback to their preferences \cite{Kaufmann2023}. Several approaches have been proposed to improve the performance and efficiency of RLHF \cite{Dong2023}. The safety of RLHF has been studied by \cite{Dai2024}. The alignment of multimodal large language models with human preferences has been investigated by \cite{Yu2024,Zhang2025}. However, these works typically assume that preferences can be captured using a single feedback signal. In practice, preferences can be diverse, and relying on a single signal may be insufficient to represent this variability.

\textbf{Multi-Objective Reinforcement Learning.} The problem of multi-objective optimization has attracted significant attention in reinforcement learning \cite{Hayes2022,Jain2023}. Several studies have extended deep reinforcement learning techniques to address multi-objective problems \cite{Yang2019,Abdolmaleki2020,Lin2022}. However, focusing on a single mode of the reward function can limit the ability of multi-objective reinforcement learning methods to learn objective-specific skills and may reduce the diversity of the generated outputs. Moreover, when fine-tuning large foundation models, the scalability of multi-objective reinforcement learning becomes critical, potentially making traditional approaches unsuitable for such large-scale applications. To fine-tune foundation models on multiple objectives, the Rewarded Soups \cite{Rame2023} method has been proposed. It follows an expert-merging approach, where a separate model is fine-tuned for each objective and then linearly combined to obtain a unified policy. To improve the performance of expert-merging methods particularly in molecular design applications, a more complex merging algorithm was introduced in \cite{Calanzone2025}.

\textbf{Supervised Fine-Tuning.} Multi-dimensional attributes can be used as conditioning signals for supervised fine-tuning of LLMs \cite{Dong2023steer,Ramnath2023}. This strategy has been applied to the problem of fine-tuning LLMs on multiple objectives in \cite{Yang2024}. By appending the rewards associated with the objectives of interest to the prompts, supervised fine-tuning approach in \cite{Yang2024} enables the LLM to learn the relationships between prompt–response pairs and the corresponding multi-objective reward space.

\section{Preliminaries} \label{prelim}
This section defines the problem of fine-tuning language models with multiple objectives and reviews relevant approaches.

\subsection{Multi-Objective Fine-Tuning Problem}
Let $\pi_\vtheta$ denote the policy of a language model parameterized by $\vtheta$, and let $\pi_{\text{ref}}$ represent the initial (reference) policy of the model. Given a prompt $\vx$, the policy $\pi_\vtheta$ generates a response $\vy$ by sampling from the distribution $\pi_\vtheta(\vy \mid \vx)$. Suppose there are $N$ objectives, $R_1, \ldots, R_N$, where for each objective $R_i$, the goal is to learn a policy $\pi_\vtheta$ that minimizes the corresponding loss function $\gL_i(\pi_\vtheta)$, defined as:
\begin{align}
\gL_i(\pi_\vtheta) = -\E_{\vx \sim p_{\text{data}},\ \vy \sim \pi_{\vtheta}}[R_i(\vx, \vy, \pi_\vtheta, \pi_{\text{ref}})]. \label{eq:1}
\end{align}
Each $R_i$ can represent an objective commonly used in reinforcement learning-based methods such as PPO or DPO. For example, in the context of Reinforcement Learning from Human Feedback (RLHF), assuming access to a reward model $r_\phi$ parameterized by $\phi$, the objective $R_i$ may be defined as:
\begin{align}
R_i(\vx, \vy, \pi_\vtheta, \pi_{\text{ref}}) = r_\phi(\vx, \vy) - \beta \log \frac{\pi_\vtheta(\vy|\vx)}{\pi_{\text{ref}}(\vy|\vx)}, \label{eq:2}
\end{align}
where $\beta \ge 0$ is a regularization coefficient that penalizes deviation from the reference policy, ensuring that the learned policy does not diverge excessively from $\pi_{\text{ref}}$. Assume that the weight $0 < w_i < 1$ represents the preference for objective $R_i$, where the weights satisfy $\sum_{i=1}^{N} w_i = 1$. In this work, we assume that the preference weights $w_i$ are known for each objective $R_i$. Under this assumption, the problem of multi-objective model fine-tuning can be formulated as:
\begin{align}
\vtheta^* = \argmin_{\vtheta} \sum_{i=1}^{N} w_i \gL_i(\pi_\vtheta). \label{eq:3}
\end{align}
This optimization problem can be addressed using stochastic gradient descent (SGD) techniques. In the remainder of this section, we review two approaches for solving it.

\subsection{Reward Combining} \label{MORLHF}
One approach to solving the optimization problem in \eqref{eq:3} is to apply reinforcement learning with combined rewards. We refer to this approach as MORLHF in this paper. Let the policy $\pi_\vtheta$ be optimized over $T$ steps, with $\vtheta_t$ denoting the policy parameters at step $1 \le t \le T$. At each step $t$, MORLHF defines a combined single-objective loss as:
\begin{align}
\gL_{\text{MORLHF}}(\pi_{\vtheta_t}) = - \E_{\vx \sim p_{\text{data}},\ \vy \sim \pi_{\vtheta_t}} \left[ \sum_{i=1}^{N} w_i R_i(\vx, \vy, \pi_{\vtheta_t}, \pi_{\text{ref}}) \right]. \label{eq:4}
\end{align}
Using the loss function in \eqref{eq:4}, the parameters are updated via gradient descent as:
\begin{align}
\vtheta_{t+1} = \vtheta_t - \eta \nabla_{\vtheta} \gL_{\text{MORLHF}}(\pi_{\vtheta_t}),
\end{align}
where $\eta$ is the learning rate. It is worth noting that multi-objective reinforcement learning can be implemented in various ways through reward combination, with the formulation in \eqref{eq:4} being just one of them.

\subsection{Rewarded Soups} \label{RS}
An alternative approach to solving the problem in \eqref{eq:3} is the Rewarded Soups method. This technique optimizes the policy $\pi_\vtheta$ over $T$ steps with respect to each objective $R_i$, yielding a set of parameters $\vtheta_i$. Specifically, at each step $t$, the parameters are updated as follows:
\begin{align}
\vtheta_{i,t+1} = \vtheta_{i,t} - \eta \nabla_{\vtheta} \gL_i(\pi_{\vtheta_{i,t}}).
\end{align}
After $T$ steps, the parameter $\vtheta_i = \vtheta_{i,T}$ is obtained. The final policy $\pi_{\vtheta_{\text{RS}}}$ is formed by merging the set of expert policies $\{\pi_{\vtheta_i}\}_{i=1}^{N}$. Each $\pi_{\vtheta_i}$ is treated as an expert trained on objective $R_i$, and the merged policy acts as an ensemble of these experts. A common merging strategy is to take a weighted linear combination of the parameters:
\begin{align}
\vtheta_{\text{RS}} = \sum_{i=1}^{N}{\lambda_i \vtheta_i}
\end{align}
where each $0 \le \lambda_i \le 1$ is a weight associated with the $i$-th objective, satisfying $\sum_{i=1}^{N} \lambda_i = 1$. These weights ${\lambda_i}$ can be optimized to minimize the loss in \eqref{eq:3}. One approach is to randomly sample candidate weight sets using Monte Carlo methods and select the one yielding the lowest loss. However, this can be computationally expensive. A simpler and more efficient alternative is to set $\lambda_i = w_i$, thereby weighting each expert policy in proportion to its corresponding objective preference.

Comparing MORLHF (Subsection \ref{MORLHF}) with Rewarded Soups (Subsection \ref{RS}), we observe key differences in their approaches. MORLHF optimizes a combined reward signal, aiming to directly learn a policy that balances multiple objectives. In contrast, Rewarded Soups trains separate expert policies for each objective and then constructs the final policy by merging these experts. Because MORLHF does not explicitly specialize in any individual objective, the resulting policy may exhibit high performance variance across different objectives. Conversely, while Rewarded Soups ensures that each expert is well-optimized for its corresponding objective, significant variance among the experts themselves can lead to a merged policy that performs poorly across all objectives.

\section{Proposed Iterative Fine-Tuning with Multiple Objectives}
As discussed in Section \ref{prelim}, MORLHF may exhibit high performance variance across objectives, while Rewarded Soups may experience significant variance among expert policies. This section introduces the proposed approach for fine-tuning models on multiple objectives. By iteratively training expert policies for individual objectives and merging them, the proposed method offers a principled way to mitigate both performance variance across objectives and variances among expert policies.

\subsection{Algorithm}
The proposed algorithm learns an expert policy corresponding to each reward. Let $\vtheta_{i,t}$ denote the parameters of the policy associated with the $i$-th objective at optimization step $t$. Every $m$ steps where $m$ is an integer hyperparameter the expert policy parameters ${\vtheta_{i,t}}$ are merged to produce an updated shared parameter vector $\boldsymbol{\rho}_t$. This merged parameter is then assigned to all expert policies, synchronizing them before continuing individual optimization. To reduce computational complexity, a subset of $M \le N$ objectives can be selected uniformly at random at each merging step to update only the corresponding expert policy parameters between two merging steps. Let $\sS_t$ denote the set of indices for the selected objectives at step $t$. The update rule is defined as follows:
\begin{align}
    \vtheta_{i,t+1} = \begin{cases}
        \vtheta_{i,t} - \eta \nabla_{\vtheta} \gL_i(\pi_{\vtheta_{i,t}}), & \text{if} ~ t\bmod m \neq 0 \\
        \boldsymbol{\rho}_t - \eta \nabla_{\boldsymbol{\rho}} \gL_i(\pi_{\boldsymbol{\rho}_t}), & \text{if} ~ t\bmod m = 0
    \end{cases}, \forall i \in \sS_t \label{eq:5}
\end{align}
where $t \bmod m$ denotes the remainder of $t$ divided by $m$. Note that when $t \bmod m \neq 0$, the subset remains unchanged, i.e., $\sS_t = \sS_{t-1}$. 
Various strategies can be used to merge the policy parameters ${\vtheta_{i,t}}$ to compute $\boldsymbol{\rho}_t$. For simplicity, we adopt a linear combination:
\begin{align}
    \boldsymbol{\rho}_t = \sum_{i \in \sS_t}{\lambda_{i,t} \vtheta_{i,t}}, ~\text{such that}~ \sum_{i \in \sS_t}{\lambda_{i,t}} = 1, \forall t: t\bmod m = 0. \label{eq:6}
\end{align}
Furthermore, if $t \bmod m = 0$, a new subset of objectives $\sS_t$ is selected by uniformly sampling $M$ objectives at random. The weights $\lambda_{i,t} \geq 0$ can be determined using Monte Carlo methods, by sampling different sets of coefficients and selecting the one that minimizes the weighted loss. To reduce computational overhead, a simpler alternative is to fix the weights as 
\begin{align}
    \lambda_{i,t} = \frac{w_i}{\sum_{j \in \sS_t}{w_j}}
\end{align} 
aligning them with predefined objective preferences. \Algref{alg:1} summarizes the proposed algorithm. Since every $m$ steps involve a merging procedure similar to Rewarded Soups, we refer to the proposed method as \emph{IterativeRS}, short for Iterative Rewarded Soups.

\begin{algorithm}[tb]
\caption{IterativeRS: Iterative Multi-Objective Model Fine-Tuning}
\label{alg:1}
\begin{algorithmic}[1]
\STATE {\bfseries Input:} Reference policy $\pi_{\text{ref}}$, learning rate $\eta$, merge frequency $m$.
            \STATE Initialize $\pi_{\vtheta_{i,1}}$, $\forall i \in \{1,\ldots,N\}$ as $\pi_{\text{ref}}$; $\sS_0$ by sampling $M$ objectives uniformly.
\FOR{$t=1,\ldots,T$}
\STATE Set $\sS_t = \sS_{t-1}$
\IF{$t \bmod m = 0$}
\STATE Merge policy weights $\{\vtheta_{i,t}\}_{i=1}^N$ to obtain the shared parameter $\boldsymbol{\rho}_t$ as in \eqref{eq:6}.
\STATE Sample uniformly at random $M$ objectives to update $\sS_t$.
\ENDIF
\STATE For any objective $i \in \sS_t$, update the policy parameter $\vtheta_{i,t}$ as in \eqref{eq:5}.
\ENDFOR
\STATE Merge all policy weights $\{\vtheta_{i,T}\}_{i=1}^N$ to obtain the shared parameter $\boldsymbol{\rho}_T$.
            \STATE {\bfseries Output:} Policy $\pi_{\boldsymbol{\rho}_T}$.
\end{algorithmic}
\end{algorithm}

\subsection{Analysis} \label{sec:ana}
This section analyzes the performance of IterativeRS. To gain a clearer understanding, we examine its convergence behavior in cases where the loss function $\gL_i(\pi_\vtheta)$ is convex with respect to $\vtheta$. While this convexity assumption may not hold in practical scenarios, the analysis provides valuable insight into the impact of hyperparameters on IterativeRS's performance. It is worth noting that MORLHF and Rewarded Soups can be viewed as special cases of IterativeRS by setting $\lambda_{i,t} = w_i$ and optimizing all objectives at each step. According to \Algref{alg:1} and Subsections \ref{MORLHF} and \ref{RS}, setting $m = 1$ in IterativeRS recovers MORLHF described by \eqref{eq:4}, while setting $m = T$ corresponds to Rewarded Soups. 

The following assumptions are made for the analysis:
\begin{assumption}
    Loss functions $\gL_i(\cdot)$, $\forall i \in \{1,\ldots,N\}$ are $L$-smooth such that $\gL_i(\pi_{\vtheta_1}) \le \gL_i(\pi_{\vtheta_2}) + \frac{L}{2}\|\vtheta_1 - \vtheta_2\|^2$, $\forall \vtheta_1, \vtheta_2$. \label{ass:1}
\end{assumption}
\begin{assumption}
    Loss functions $\gL_i(\cdot)$, $\forall i \in \{1,\ldots,N\}$ are $\mu$-strongly convex such that $\gL_i(\pi_{\vtheta_1}) \ge \gL_i(\pi_{\vtheta_2}) + (\vtheta_1 - \vtheta_2)^\top \nabla\gL_i(\vtheta_2) + \frac{\mu}{2}\|\vtheta_1 - \vtheta_2\|^2$, $\forall \vtheta_1, \vtheta_2$. \label{ass:2}
\end{assumption}
\begin{assumption}
    Loss gradients are bounded from above as $\|\nabla \gL_i(\pi_{\vtheta})\| \le G$, $\forall \vtheta$, $\forall i \in \{1,\ldots,N\}$. \label{ass:3}
\end{assumption}

Let the overall loss of a policy $\pi_\vtheta$ be defined as
\begin{align}
    \gL(\pi_\vtheta) = \sum_{i=1}^{N}{w_i \gL_i(\pi_\vtheta)},
\end{align}
where $\gL_i(\pi_\vtheta)$ is defined in \eqref{eq:1}. Let $\vtheta^*$ denote the optimal policy parameters for the multi-objective loss, and let $\vtheta_i^*$ denote the optimal policy parameters for the objective $i$, defined as
\begin{align}
    \vtheta^* = \argmin_{\vtheta}{\gL(\pi_{\vtheta})},~ \vtheta_i^* = \argmin_{\vtheta}{\gL_i(\pi_{\vtheta})} \label{eq:1cr}
\end{align}
The following theorem provides a convergence bound for IterativeRS, with the proof presented in the Appendix \ref{proof}. The theorem is proved under the assumption that the merged policy parameter is computed as $\boldsymbol{\rho}_t = \frac{N}{M}\sum_{i \in \sS_t}{w_i \vtheta_{i,t}}$ where $w_i=\frac{1}{N}$, $\forall i \in \{1,\ldots,N\}$. The extension to non-uniform weights $w_i$ is straightforward and is discussed in Appendix \ref{proof}.

\begin{theorem} \label{th:1}
    Let the learning rate at step $t$ is set as $\eta_t = \frac{2}{\mu(\gamma+t)}$ where $\gamma = \max\{\frac{8L}{\mu},m\}-1$. Furthermore, let $\vtheta_{\text{ref}}$ denote the policy parameter of the initial reference policy $\pi_{\text{ref}}$. Under assumptions A \ref{ass:1}--A \ref{ass:3}, the performance gap of policy learned by IterativeRS with respect to the optimal policy $\pi_{\vtheta^*}$ is bounded from above as:
    \begin{align}
        \gL(\pi_{\boldsymbol{\rho}_T}) - \gL(\pi_{\vtheta^*}) \le & \frac{4L}{\mu^2 (\gamma + T)} \left( 3L\Delta^* + 2(2(m-1)^2+\frac{N-M}{N-1}\frac{m^2}{M})G^2 \right) \nonumber \\ &+ \frac{\gamma L}{2(\gamma + T)}\|\vtheta_{\text{ref}} - \vtheta^*\|^2  \label{eq:7}
    \end{align}
    where $\Delta^*$ be defined as:
    \begin{align}
        \Delta^* = \gL(\pi_{\vtheta^*}) - \sum_{i=1}^{N}{w_i\gL_i(\pi_{\vtheta_i^*})}.
    \end{align}
\end{theorem}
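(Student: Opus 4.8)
The plan is to treat IterativeRS as a form of \emph{local SGD} / federated averaging with periodic synchronization, and to adapt the standard strongly-convex convergence argument for such schemes. The key object is the ``virtual averaged iterate'' $\bar{\vtheta}_t = \frac{N}{M}\sum_{i\in\sS_t} w_i \vtheta_{i,t}$, which at synchronization steps ($t \bmod m = 0$) equals the true merged parameter $\boldsymbol{\rho}_t$, and between synchronizations tracks what the merged parameter ``would be.'' First I would write a one-step recursion for $\|\bar{\vtheta}_{t+1} - \vtheta^*\|^2$. Expanding the square, the cross term produces $-2\eta_t \langle \bar{\vtheta}_t - \vtheta^*, \frac{N}{M}\sum_{i\in\sS_t} w_i \nabla\gL_i(\pi_{\vtheta_{i,t}})\rangle$; the subtlety here (versus vanilla SGD) is that the gradients are evaluated at the \emph{individual} iterates $\vtheta_{i,t}$, not at $\bar{\vtheta}_t$. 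I would handle this by adding and subtracting $\nabla\gL_i(\pi_{\bar{\vtheta}_t})$, using $L$-smoothness (Assumption~\ref{ass:1}) and $\mu$-strong convexity (Assumption~\ref{ass:2}) on the ``at $\bar{\vtheta}_t$'' part to get a contraction of the form $(1-\mu\eta_t)\|\bar{\vtheta}_t - \vtheta^*\|^2$ plus a negative functional-gap term $-2\eta_t(\gL(\pi_{\bar{\vtheta}_t}) - \gL(\pi_{\vtheta^*}))$, while the ``difference'' part is controlled by a \emph{deviation term} $\sum_i w_i \|\vtheta_{i,t} - \bar{\vtheta}_t\|^2$ via Young's inequality.

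Second, I would bound the deviation $\E\big[\sum_{i\in\sS_t} w_i \|\vtheta_{i,t} - \bar{\vtheta}_t\|^2\big]$. Since all experts were last synchronized at step $t_0 := m\lfloor t/m\rfloor \ge t - (m-1)$, each $\vtheta_{i,t}$ differs from $\boldsymbol{\rho}_{t_0}$ by at most $\sum_{s=t_0}^{t-1}\eta_s \|\nabla\gL_i\| \le (m-1)\eta_{t_0} G$ by Assumption~\ref{ass:3} and monotonicity of $\eta_s$; combining the deviations of each expert from the common anchor yields a bound of order $(m-1)^2 \eta_t^2 G^2$, which accounts for the $2(m-1)^2 G^2$ term in~\eqref{eq:7}. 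The extra $\frac{N-M}{N-1}\cdot\frac{m^2}{M} G^2$ term comes from the \emph{sampling variance} of choosing $M$ of $N$ objectives uniformly at random: I would compute the variance of the rescaled estimator $\frac{N}{M}\sum_{i\in\sS_t} w_i(\cdot)$ as an estimator of $\sum_{i=1}^N w_i(\cdot)$, which introduces the finite-population correction factor $\frac{N-M}{N-1}$ and a $\frac{1}{M}$; bounding the per-objective displacement over a full window of $m$ steps by $m\eta_t G$ (in norm) gives the $m^2/M$ scaling.

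Third, I would introduce the ``heterogeneity'' constant $\Delta^* = \gL(\pi_{\vtheta^*}) - \sum_i w_i \gL_i(\pi_{\vtheta_i^*}) \ge 0$: it appears because the individual gradient steps drive $\vtheta_{i,t}$ toward $\vtheta_i^*$ rather than $\vtheta^*$, and the standard way to absorb this is to use the inequality $\sum_i w_i(\gL_i(\pi_{\bar{\vtheta}_t}) - \gL_i(\pi_{\vtheta_i^*})) \ge$ (something) to convert the leftover $\gL_i$ gap into the global gap $\gL(\pi_{\bar{\vtheta}_t}) - \gL(\pi_{\vtheta^*})$ at the cost of a $+\,cL\Delta^*$ additive term — this is where the $3L\Delta^*$ in the bound originates (the constant $3$ coming from how Young's-inequality weights are chosen to also keep the functional-gap term negative so it can be dropped). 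Assembling, I get a recursion of the shape
\begin{align}
\E\|\bar{\vtheta}_{t+1} - \vtheta^*\|^2 \le (1-\mu\eta_t)\,\E\|\bar{\vtheta}_t - \vtheta^*\|^2 + \eta_t^2 B,
\end{align}
with $B = 6L\Delta^* + 4\big(2(m-1)^2 + \tfrac{N-M}{N-1}\tfrac{m^2}{M}\big)G^2$ (up to the exact constants). Fourth, with $\eta_t = \frac{2}{\mu(\gamma+t)}$ and $\gamma = \max\{8L/\mu, m\} - 1$, I would close the argument by the usual induction: show $\E\|\bar{\vtheta}_t - \vtheta^*\|^2 \le \frac{v}{\gamma+t}$ with $v = \max\{ \gamma\|\vtheta_{\text{ref}} - \vtheta^*\|^2,\ 4B/\mu^2\}$, then convert the iterate bound to the function-value bound via $L$-smoothness, $\gL(\pi_{\boldsymbol{\rho}_T}) - \gL(\pi_{\vtheta^*}) \le \frac{L}{2}\E\|\boldsymbol{\rho}_T - \vtheta^*\|^2$ (note $\boldsymbol{\rho}_T = \bar{\vtheta}_T$ at the final merge), which produces exactly the two terms in~\eqref{eq:7}.

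The main obstacle I anticipate is the \emph{deviation bound coupled with random subset sampling}: cleanly separating the ``client drift'' contribution ($\propto (m-1)^2 G^2$) from the ``sampling variance'' contribution ($\propto \frac{N-M}{N-1}\frac{m^2}{M}G^2$) requires carefully choosing which quantity plays the role of the anchor (the last merged $\boldsymbol{\rho}_{t_0}$ vs.\ the current virtual average $\bar{\vtheta}_t$) and tracking conditional expectations over $\sS_t$ correctly, since $\sS_t$ is resampled only at synchronization steps while the $\vtheta_{i,t}$ evolve deterministically in between. A secondary delicate point is ensuring all the Young's-inequality constants line up so that the functional-gap term $-2\eta_t(\gL(\pi_{\bar\vtheta_t}) - \gL(\pi_{\vtheta^*}))$ stays nonpositive after absorbing the smoothness cross-terms (this typically forces $\eta_t \le 1/(4L)$, which is exactly what $\gamma \ge 8L/\mu - 1$ guarantees).
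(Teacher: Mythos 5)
Your proposal is correct and follows essentially the same route as the paper's proof, which adapts the FedAvg analysis of Li et al.\ (2020): a one-step strongly-convex recursion on the virtual averaged iterate with a $6L\Delta^*$ heterogeneity term, a client-drift bound of order $(m-1)^2\eta_t^2G^2$ anchored at the last merge, a sampling-variance term $\frac{N-M}{M(N-1)}m^2\eta_t^2G^2$ from choosing $M$ of $N$ objectives without replacement, and the standard induction with $\eta_t=\frac{2}{\mu(\gamma+t)}$ followed by the $L$-smoothness conversion $\gL(\pi_{\boldsymbol{\rho}_T})-\gL(\pi_{\vtheta^*})\le\frac{L}{2}\|\boldsymbol{\rho}_T-\vtheta^*\|^2$. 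Even your constant $B=6L\Delta^*+4\bigl(2(m-1)^2+\tfrac{N-M}{N-1}\tfrac{m^2}{M}\bigr)G^2$ and the step-size condition $\eta_t\le\frac{1}{4L}$ enforced by $\gamma\ge\frac{8L}{\mu}-1$ coincide with the paper's.
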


In what follows, the effects of the hyperparameters are analyzed using Theorem~\ref{th:1}. It is important to note, however, that a tighter performance gap upper bound in \eqref{eq:7}, does not necessarily translate to better performance during deployment. It primarily reflects improved convergence during training and may increase the risk of overfitting. Therefore, while the theoretical analysis helps in understanding the impact of hyperparameters, practical performance should be monitored using a validation set.

\textbf{Effects of $\pi_{\text{ref}}$ and $\Delta^*$.} From \eqref{eq:7}, it can be inferred that decrease in $\|\vtheta_{\text{ref}} - \vtheta^*\|$ improves the performance gap upper bound. This suggests that initializing with a stronger reference policy yields a more effective fine-tuned policy. Furthermore, \eqref{eq:7} shows that smaller $\Delta^*$ results in tighter performance gap upper bound. A smaller $\Delta^*$ can be achieved when the optimal policies corresponding to individual objectives exhibit less variation. Therefore, Theorem~\ref{th:1} suggests that greater similarity among objectives facilitates learning the optimal policy.

\textbf{Choice of $M$ and $T$.} Using the bound in \eqref{eq:7}, it can be observed that increasing the number of selected objectives $M$ leads to a tighter upper bound on the performance gap. This is expected, as learning over a larger set of objectives at each time step typically results in a better final policy. However, increasing $M$ also increases the computational complexity. Similarly, \eqref{eq:7} indicates that increasing the number of steps $T$ tightens the upper bound on the performance gap, but at the cost of greater computational complexity. Thus, a trade-off arises between minimizing the performance gap and managing computational cost.

\textbf{Choice of m.} In order to understand the effect of $m$ on the upper bound in \eqref{eq:7}, let break the upper bound into two terms $A_1$ and $A_2$ where
\begin{subequations}
    \begin{align}
        A_1 &= \frac{12L\Delta^*}{\mu^2 (\gamma + T )} \\
        A_2 &= \frac{8L}{\mu^2 (\gamma + T)}\left(2(m-1)^2+\frac{N-M}{N-1}\frac{m^2}{M}\right)G^2 + \frac{\gamma L}{2(\gamma + T)}\|\vtheta_{\text{ref}} - \vtheta^*\|^2.
    \end{align}
\end{subequations}
Given that $\gamma = \max\{\frac{8L}{\mu},m\}$, if $m \ge \frac{8L}{\mu}$, increasing $m$ can lead to a reduction in the term $A_1$. On the other hand, increasing $m$ is more likely to increase the term $A_2$. The overall effect of $m$ on the upper bound depends on which term dominates. Therefore, the impact of $m$ is influenced by several factors, including the loss function and even the dataset, which may not be known a priori. As previously discussed, MORLHF and Rewarded Soups represent two extreme cases, where $m=1$ (MORLHF) and $m=T$ (Rewarded Soups). However, a moderate choice of $m$ may yield the best trade-off. Therefore, it can be concluded that IterativeRS offers greater flexibility and potential for improvement by allowing arbitrary values of $m$.

\section{Experiments}
To evaluate the performance of IterativeRS, we conducted extensive experiments across a diverse set of tasks, including small molecule generation (Subsection \ref{exp:mol}), DNA sequence generation (Subsection \ref{exp:dna}), and text summarization (Subsection \ref{exp:text}). We compare IterativeRS against state-of-the-art baselines: MORLHF \cite{Li2020}, Rewarded Soups (RS) \cite{Rame2023}, and Rewards-in-Context (RiC) \cite{Yang2024}. We assume that all objectives are equally important across all tasks, setting the weights to $w_1=w_2=w_3=\frac{1}{3}$. It should be noted that the implementation of the MORLHF baseline in this section differs from the formulation presented in \eqref{eq:4} in Subsection \ref{MORLHF}. To fine-tune models using IterativeRS, RS, and MORLHF, we employed PPO \cite{PPO}. We evaluate the performance of each algorithm using the average rewards of its generated samples, both per objective and across all objectives. In addition, we report an \emph{inverse coefficient of variation (ICV)} score to quantify performance consistency across objectives. For a given sample, the ICV score is defined as the average reward across all objectives divided by the standard deviation of those rewards. The average ICV score over $S$ samples is computed as
\begin{align}
    \text{ICV} = \frac{1}{S} \sum_{j=1}^{S}{\frac{\frac{1}{N}(R_{j,1}+\ldots+R_{j,N})}{\text{std}(R_{j,1},\ldots,R_{j,N})}}
\end{align}
where $R_{j,i}$ denotes the reward obtained by sample $j$ on objective $i$, and $\text{std}(R_{j,1}, \ldots, R_{j,N})$ represents the standard deviation of rewards across objectives. A higher ICV score indicates lower variability and more balanced performance across objectives. Codes are available at \url{https://github.com/pouyamghari/IterativeRS}.

\subsection{Small Molecule Generation} \label{exp:mol}
The goal of this task is to generate small molecules that exhibit specific desirable energy properties. Specifically, the task involves generating molecules that (1) maximize polarizability ($\alpha$ energy), (2) maintain a moderate HOMO-LUMO gap, and (3) minimize internal energy at 0 K ($U_0$). To evaluate the properties of molecules generated by IterativeRS and the baseline methods, we use PAMNet \cite{Zhang2023} as the oracle model. PAMNet is specifically trained to predict molecular properties from the QM9 dataset. A GPT-2 model is pre-trained on SMILES representations of 2 million molecules from the MOSES dataset \cite{Polykovskiy2020}, resulting in a model referred to as MolGPT-2. This pre-trained model is then fine-tuned on the QM9 dataset \cite{Blum2009, Rupp2012} to optimize for multiple objectives. To fine-tune models using IterativeRS, RS, and MORLHF, we employed PPO \cite{PPO} with a reward model trained on the QM9 dataset. Rewards for each objective are normalized to the interval $[0,1]$ using statistics computed from the training data. For RiC, supervised fine-tuning was performed using the QM9 dataset. To generate molecules, we first sample 10,000 SMILES representations using the fine-tuned model. Then, using RDKit, we construct 3D structures for each generated SMILES. Due to potential randomization in the 3D coordinates produced by RDKit, we generate 10 distinct 3D conformations for each SMILES. The resulting structures are then evaluated using PAMNet. More implementation details can be found in Appendix \ref{supp_mol_exp}.

\begin{figure}[t]
    \centering
    \includegraphics[width=1\linewidth]{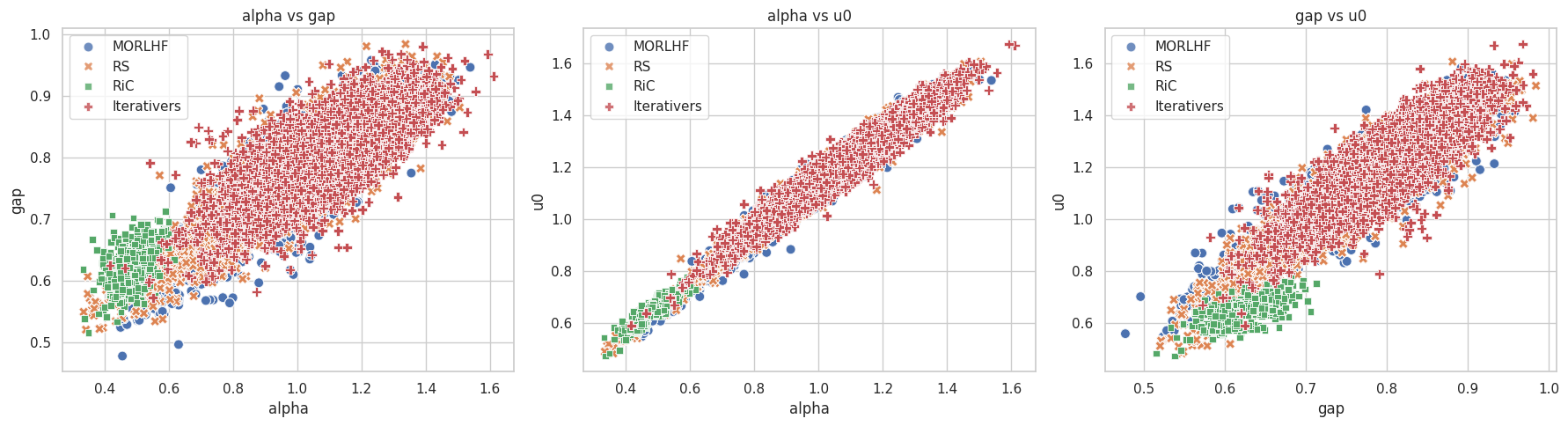} 
    \caption{Pairwise scatter plots of generated molecules in the reward space for the three objectives.}
    \label{fig:qm9_scatter}
\end{figure}

Table \ref{table:1} presents the performance of IterativeRS and other baseline methods in molecule generation. For IterativeRS, the merging frequency is set to $m=4$. Moreover, for all reinforcement learning-based approaches (IterativeRS, RS, and MORLHF) the number of optimization steps is set to $T=100$. To evaluate each method, we compute the Pareto front of the generated molecules and report the average reward for each objective within that front. If one generated molecule outperforms another (both produced by the same model) across all objectives, the former is said to dominate the latter and is included in the Pareto optimal set, while the dominated sample is considered suboptimal. As shown in Table \ref{table:1}, RL-based methods outperform RiC in terms of reward. This is likely because IterativeRS, RS, and MORLHF allow the pre-trained foundation model to interact with reward models during training, enabling it to explore and learn to generate higher-quality SMILES. In contrast, RiC relies solely on labeled data and lacks the exploration benefits provided by reinforcement learning. Since the distribution of pre-trained data differs from the labeled dataset, RL-based methods are better equipped to discover molecules with higher rewards than those present in the training set. Furthermore, as can be seen from Table \ref{table:1}, IterativeRS outperforms both MORLHF and RS in terms of average reward.

\begin{table}[t]
\centering
\caption{Average performance of Pareto-optimal molecules generated by multi-objective approaches.}
\begin{tabular}{l||c|c|c|c|c}
\toprule
 & $\boldsymbol{\alpha}$ \textbf{energy} & \textbf{gap} & $\boldsymbol{U_0}$ \textbf{energy} & \textbf{Avg Reward} & \textbf{ICV} \\ \hline
MORLHF & \underline{1.4229} & 0.9355 & 1.5146 & 1.2910 & \underline{4.1883} \\
RS & 1.4134 & \textbf{0.9589} & \underline{1.5464} & \underline{1.3062} & \textbf{4.2674} \\
RiC & 0.5955 & 0.6795 & 0.7544 & 0.6765 & 3.7538 \\ \hline
IterativeRS & \textbf{1.5893} & \underline{0.9508} & \textbf{1.6649} & \textbf{1.4017} & 3.5854 \\
\bottomrule
\end{tabular}
\label{table:1}
\end{table}

Figure \ref{fig:qm9_scatter} presents scatter plots of the molecules generated by each method. Each subplot depicts the relationship between two objectives, with each point representing a molecule generated by the corresponding model. These plots illustrate how the generated molecules are distributed across the objective space. Notably, Figure \ref{fig:qm9_scatter} shows that the highest-scoring molecules are produced by IterativeRS. This is particularly important for molecule design, where the goal is often to identify a small number of molecules with optimal properties. These results highlight the effectiveness of IterativeRS in the small molecule generation task.

\subsection{DNA Sequence Generation} \label{exp:dna}
The goal is to generate DNA sequences that exhibit desired regulatory activities in specific cell lines K562, HepG2 and SKNSH. To this end, a GPT-2 model referred to as DNAGPT-2 is pre-trained on approximately 700,000 unlabeled DNA sequences, each 200 base pairs long, from the MPRA dataset \cite{Gosai2023}, comprising over 35 million tokens. The objective is to generate sequences with maximal regulatory activity across three different cell lines. For fine-tuning, we use a labeled subset of 100,000 sequences along with their corresponding activity measurements in the three target cell lines. To assess the quality of the generated sequences, we utilize the Malinois model \cite{Gosai2023} as an oracle predictor of regulatory activity. Rewards for each objective are normalized to the interval $[0,1]$ using statistics computed from the training data. Each method generates 10,000 DNA sequences. More implementation details can be found in Appendix \ref{supp_dna_exp}.

\begin{figure}[t]
    \centering
    \includegraphics[width=1\linewidth]{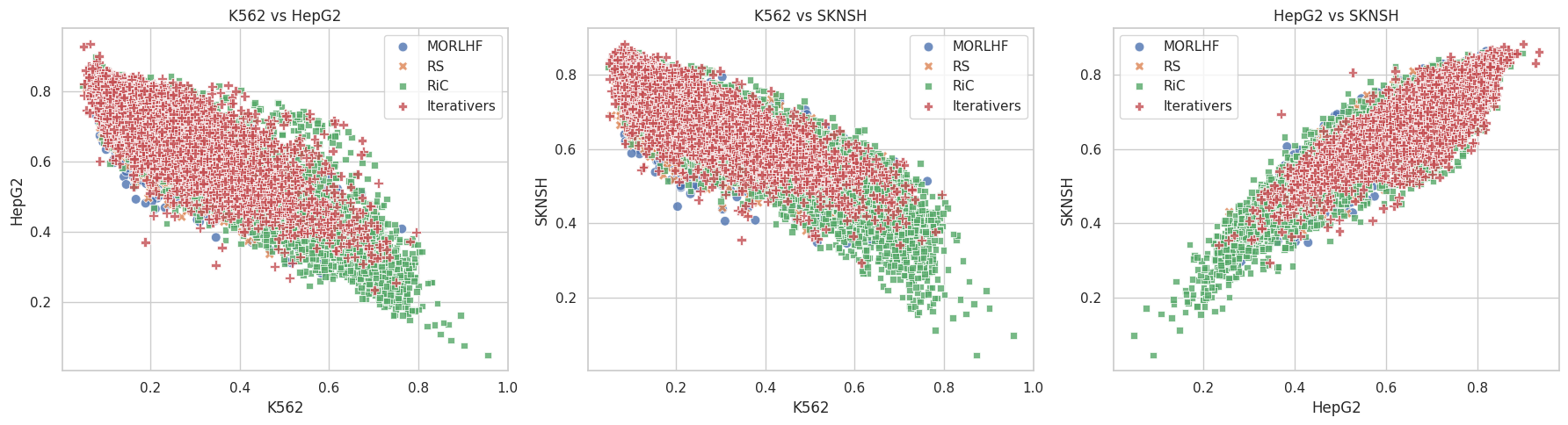} 
    \caption{Pairwise scatter plots of generated DNA sequences in the reward space for the three objectives.}
    \label{fig:mpra_scatter}
\end{figure}

Table \ref{table:2} presents the performance of different algorithms in generating DNA sequences. For IterativeRS, the merging frequency is set to $m=8$, and for all reinforcement learning (RL)-based methods the number of optimization steps is fixed at $T=200$. For each method, the Pareto front of the generated DNA sequences is extracted based on their rewards across the objectives. As shown in Table \ref{table:2}, RiC achieves higher average reward scores than MORLHF and RS. Unlike the small molecule generation task, the distribution of data used to pre-train the foundation model aligns closely with the supervised training data for DNA sequences. As a result, RL-based methods provide less benefit in this setting compared to supervised fine-tuning. While IterativeRS achieves an average reward that is 1\% higher than RiC, IterativeRS attains a 35\% higher ICV score, indicating significantly greater consistency in performance across objectives. Moreover, IterativeRS outperforms both RS and MORLHF in terms of average reward.

\begin{table}[t]
\centering
\caption{Average performance of Pareto-optimal DNA sequences generated by multi-objective approaches.}
\begin{tabular}{l||c|c|c|c|c}
\toprule
 & \textbf{K562} & \textbf{HepG2} & \textbf{SKNSH} & \textbf{Avg Reward} & \textbf{ICV} \\ \hline
MORLHF & 0.2724 & \underline{0.7096} & \underline{0.7183} & 0.5667 & 3.1356 \\
RS & \underline{0.3057} & 0.6808 & 0.7131 & 0.5666 & \underline{3.8235} \\
RiC & \textbf{0.4221} & 0.6615 & 0.6688 & \underline{0.5842} & 2.4672 \\ \hline
IterativeRS & 0.3032 & \textbf{0.7370} & \textbf{0.7378} & \textbf{0.5927} & \textbf{3.8310} \\
\bottomrule
\end{tabular}
\label{table:2}
\end{table}

Figure \ref{fig:mpra_scatter} presents scatter plots of DNA sequences generated by each method, with each subplot comparing the rewards of two objectives. The results indicate that sequences generated by RiC exhibit greater variability across objectives compared to those produced by IterativeRS. Notably, IterativeRS generates fewer DNA sequences with low rewards, demonstrating a more consistent performance compared to RiC.

\subsection{Text Summarization} \label{exp:text}
The task is to summarize Reddit posts. To accomplish this, we use Llama-3.2-3B-Instruct as the base model. This foundation model is fine-tuned on the Reddit Summary dataset \cite{Stiennon2020} for the post summarization task. To evaluate the quality of the generated summaries, we employ three different reward models: bart-faithful-summary \cite{Chen2021}, gpt2-reward-summary \footnote{\url{https://huggingface.co/Tristan/gpt2_reward_summarization}}, deberta-v3 \footnote{\url{https://huggingface.co/OpenAssistant/reward-model-deberta-v3-large-v2}}. The rewards assigned by bart-faithful-summary, gpt2-reward-summary, deberta-v3 are referred to as the \emph{faithful} score, \emph{summary} score, and \emph{deberta} score, respectively. All reported rewards are normalized to the range $[0,1]$ using statistics computed from the training dataset. The merging steps in RS and IterativeRS are performed using seven different sets of merging weights. For each set, a merged model is obtained, and the model that achieves the highest average reward according to the reward models is selected as the final merged model for the text summarization task. It is worth noting that one of the main differences between IterativeRS and RS is that, according to \Algref{alg:1}, IterativeRS performs merging both during and after training, whereas RS merges the expert policies only once after training. More implementation details can be found in Appendix \ref{supp_text_exp}.

Table \ref{table:3} presents the performance of the algorithms on the text summarization task for Reddit posts. The merging frequency for IterativeRS is set to $m=40$, while the number of steps for all RL-based methods is $T=160$. For each generated summary, we computed the average of the faithful, summary, deberta, and ROUGE scores as the evaluation metric to incorporate a standard metric such as ROUGE in addition to the scores assigned by the reward models. This average is reported as \emph{Avg Score} in the table. The ICV score is calculated using the faithful, summary, and deberta reward scores.
The results in Table \ref{table:3} show that IterativeRS outperforms the other baselines across all metrics. These findings indicate that employing IterativeRS can lead to improvements over RL-based approaches such as MORLHF and RS. It is also worth noting that RiC is a supervised fine-tuning (SFT) approach, unlike the RL-based methods. Although IterativeRS achieves higher scores than RiC, the superiority of RL-based approaches over SFT methods such as RiC is not universally generalizable and is influenced by the experimental conditions.
Figure \ref{fig:Reddit_scatter} shows scatter plots of the summaries generated by each method, with each subplot comparing two objectives. As seen in the figure, IterativeRS is less likely to produce responses with relatively low scores.

\begin{figure}[t]
    \centering
    \includegraphics[width=1\linewidth]{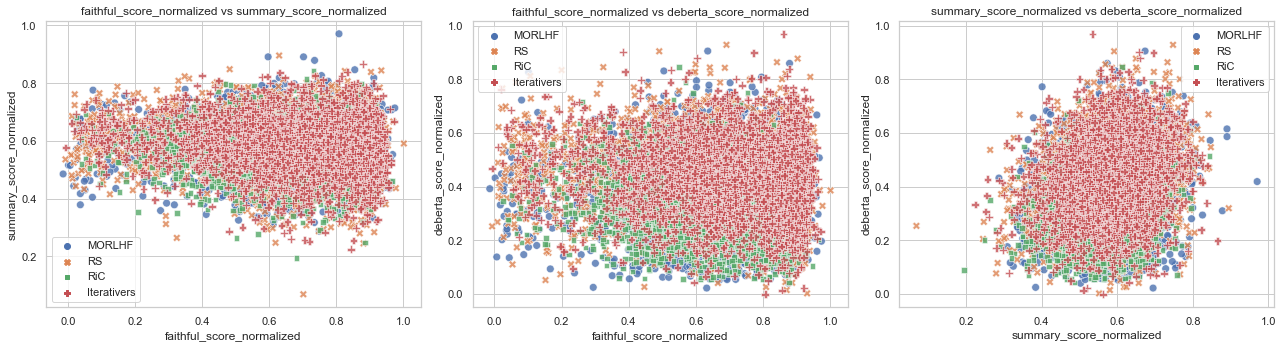} 
    \caption{Pairwise scatter plots of generated summaries in the reward space for the three objectives.}
    \label{fig:Reddit_scatter}
\end{figure}

\begin{table}[t]
\centering
\caption{Average performance of text summarization by multi-objective approaches.}
\begin{tabular}{l||c|c|c|c|c}
\toprule
 & \textbf{faithful} & \textbf{summary} & \textbf{deberta} & \textbf{Avg Score} & \textbf{ICV} \\ \hline
MORLHF & 0.6530 & 0.5778 & 0.3857 & 0.4525 & 4.5500 \\
RS & \underline{0.6732} & \underline{0.5807} & \underline{0.4296} & \underline{0.4732} & \underline{4.5870} \\
RiC & 0.6497 & 0.5688 & 0.3455 & 0.4518 & 3.9579 \\ \hline
IterativeRS & \textbf{0.6927} & \textbf{0.5854} & \textbf{0.4398} & \textbf{0.4849} & \textbf{4.9134} \\
\bottomrule
\end{tabular}
\label{table:3}
\end{table}

\section{Conclusion}
This paper introduced IterativeRS, an iterative multi-objective reinforcement learning algorithm for fine-tuning foundation models. IterativeRS fine-tunes a separate model for each objective to capture objective-specific knowledge, while mitigating divergence across expert models through an iterative merge-and-fine-tune strategy. 
The paper presents a theoretical analysis of the convergence properties of IterativeRS, offering deeper insight into its behavior. Furthermore, by formulating the problem as an optimization task, our work can potentially open new directions for improving multi-objective fine-tuning of foundation models. Experimental results across diverse tasks including small molecule generation, DNA sequence generation, and text summarization demonstrate that IterativeRS achieves higher average rewards compared to both MORLHF and Rewarded Soups.

\bibliographystyle{plain}
\bibliography{references}

\newpage
\appendix

\section{Proof of Theorem \ref{th:1}} \label{proof}
This section proves Theorem \ref{th:1}. The notations used in this section are summarized in Table \ref{table:7}. For simplicity of analysis, we assume that all data samples are used at each step, although the case where a random subset of data is sampled at each step can also be considered. In that case, Assumption A~\ref{ass:3} can be modified to $\E[\|\nabla \gL_i(\pi_{\vtheta})\|] \le G$, where the expectation is taken with respect to the randomness in data sampling. Extending the results to the case with random data sampling is straightforward. Let $\boldsymbol{\psi}_t$ be defined as $\boldsymbol{\psi}_t = \sum_{i=1}^{N}{w_i \vtheta_{i,t}}$. Furthermore, let the merged policy parameter $\boldsymbol{\rho}_t$ be defined as
\begin{align} \label{eq:1app}
    \boldsymbol{\rho}_t = \begin{cases}
        \frac{N}{M}\sum_{i \in \sS_t}{w_i \vtheta_{i,t}}, &\text{if} ~ t \bmod m = 0 \\
        \boldsymbol{\psi}_t, & \text{if} ~ t \bmod m \neq 0
    \end{cases}
\end{align}
We can write
\begin{align}
    \|\boldsymbol{\rho}_{t+1} - \vtheta^*\|^2 &= \|\boldsymbol{\rho}_{t+1} - \boldsymbol{\psi}_{t+1} + \boldsymbol{\psi}_{t+1} - \vtheta^*\|^2 \nonumber \\ &= \|\boldsymbol{\rho}_{t+1} - \boldsymbol{\psi}_{t+1}\|^2 + 2(\boldsymbol{\rho}_{t+1} - \boldsymbol{\psi}_{t+1})^\top (\boldsymbol{\psi}_{t+1} - \vtheta^*) + \|\boldsymbol{\psi}_{t+1} - \vtheta^*\|^2. \label{eq:4app}
\end{align}
To obtain an upper bound on $\|\boldsymbol{\psi}_{t+1} - \vtheta^*\|^2$, consider the Lemma \ref{lem:1}. This lemma is taken from \cite{Li2020fed}.

\begin{lemma} \label{lem:1}
Suppose Assumptions A~\ref{ass:1} and A~\ref{ass:2} hold. If $\eta_t \le \frac{1}{4L}$, then the following inequality holds:
\begin{align}
        \|\boldsymbol{\psi}_{t+1} - \vtheta^*\|^2 \le (1-\eta_t \mu)\|\boldsymbol{\psi}_t-\vtheta^*\|^2 + 6L\eta_t^2 \Delta^* + 2 \sum_{i=1}^{N}{w_i \|\boldsymbol{\psi}_t - \vtheta_{i,t}\|^2} \label{eq:2app}
    \end{align}
where $\Delta^* = \gL(\pi_{\vtheta^*}) - \sum_{i=1}^{N}{w_i\gL_i(\pi_{\vtheta_i^*})}$.
\end{lemma}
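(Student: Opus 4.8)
The plan is to prove Lemma~\ref{lem:1} by a standard one-step SGD-with-heterogeneous-objectives argument, following the federated-averaging analysis of \cite{Li2020fed}. Recall that $\boldsymbol{\psi}_t = \sum_{i=1}^N w_i \vtheta_{i,t}$ is the (virtual) averaged iterate, and that between merges each expert performs its own gradient step $\vtheta_{i,t+1} = \vtheta_{i,t} - \eta_t \nabla\gL_i(\pi_{\vtheta_{i,t}})$ (with $\vtheta_{i,t}$ replaced by $\boldsymbol{\rho}_t$ at a merge step, which is exactly what makes $\boldsymbol{\psi}_t$ the right reference point). Define the averaged gradient $\bar{\vg}_t = \sum_{i=1}^N w_i \nabla\gL_i(\pi_{\vtheta_{i,t}})$, so that $\boldsymbol{\psi}_{t+1} = \boldsymbol{\psi}_t - \eta_t \bar{\vg}_t$. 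First I would expand
\begin{align}
\|\boldsymbol{\psi}_{t+1} - \vtheta^*\|^2 = \|\boldsymbol{\psi}_t - \vtheta^*\|^2 - 2\eta_t \langle \bar{\vg}_t, \boldsymbol{\psi}_t - \vtheta^*\rangle + \eta_t^2 \|\bar{\vg}_t\|^2. \nonumber
\end{align}

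The two nontrivial terms are the inner product and $\|\bar{\vg}_t\|^2$. For $\|\bar{\vg}_t\|^2$ I would use convexity of the squared norm and $L$-smoothness to bound $\|\bar{\vg}_t\|^2 \le \sum_i w_i \|\nabla\gL_i(\pi_{\vtheta_{i,t}})\|^2 \le 2L \sum_i w_i (\gL_i(\pi_{\vtheta_{i,t}}) - \gL_i(\pi_{\vtheta_i^*}))$, the last step being the usual consequence of smoothness plus $\vtheta_i^*$ minimizing $\gL_i$. For the inner product term, I would write $\langle \nabla\gL_i(\pi_{\vtheta_{i,t}}), \boldsymbol{\psi}_t - \vtheta^*\rangle = \langle \nabla\gL_i(\pi_{\vtheta_{i,t}}), \boldsymbol{\psi}_t - \vtheta_{i,t}\rangle + \langle \nabla\gL_i(\pi_{\vtheta_{i,t}}), \vtheta_{i,t} - \vtheta^*\rangle$; the second piece is handled by $\mu$-strong convexity (Assumption A~\ref{ass:2}) to produce the contraction factor $(1-\eta_t\mu)$ and a $-(\gL_i(\pi_{\vtheta_{i,t}}) - \gL_i(\pi_{\vtheta^*}))$ term, while the first piece is bounded via Young's inequality and $L$-smoothness, trading $\langle \nabla\gL_i(\pi_{\vtheta_{i,t}}), \boldsymbol{\psi}_t - \vtheta_{i,t}\rangle \ge -\tfrac{1}{2\eta_t}\|\boldsymbol{\psi}_t - \vtheta_{i,t}\|^2 - \tfrac{\eta_t}{2}\|\nabla\gL_i(\pi_{\vtheta_{i,t}})\|^2$ and re-bounding the gradient norm by smoothness again. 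Combining, weighting by $w_i$, and summing gives terms of the form $\eta_t^2 L \sum_i w_i (\gL_i(\pi_{\vtheta_{i,t}}) - \gL_i(\pi_{\vtheta_i^*}))$ together with a term proportional to $\eta_t \sum_i w_i(\gL_i(\pi_{\vtheta^*}) - \gL_i(\pi_{\vtheta_{i,t}}))$.

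The crux is to show that these function-value terms combine into $6L\eta_t^2 \Delta^*$, i.e. that the leftover quantity
\begin{align}
\sum_i w_i(\gL_i(\pi_{\vtheta^*}) - \gL_i(\pi_{\vtheta_{i,t}})) + c\, \eta_t L \sum_i w_i(\gL_i(\pi_{\vtheta_{i,t}}) - \gL_i(\pi_{\vtheta_i^*})) \nonumber
\end{align}
for the appropriate constant $c$ can be controlled in terms of $\Delta^* = \gL(\pi_{\vtheta^*}) - \sum_i w_i \gL_i(\pi_{\vtheta_i^*}) \ge 0$. Using $\eta_t L \le \tfrac14$ one writes the combination as a nonnegative multiple of $-(\gL_i(\pi_{\vtheta_{i,t}}) - \gL_i(\pi_{\vtheta_i^*}))$ (which is $\le 0$ and hence harmless, to be dropped) plus a multiple of $\Delta^*$ after recognizing $\sum_i w_i \gL_i(\pi_{\vtheta^*}) = \gL(\pi_{\vtheta^*})$; the precise bookkeeping of constants — pulling the $\eta_t^2$ factor out and checking that the residual non-$\Delta^*$ terms are genuinely nonpositive under $\eta_t \le \tfrac{1}{4L}$ — is the main obstacle, and it is exactly the step that forces the $6L$ constant. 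Finally, I would collect the $\|\boldsymbol{\psi}_t - \vtheta_{i,t}\|^2$ terms with coefficient $2$ to obtain the last term of \eqref{eq:2app}. Since this is precisely Lemma~1 of \cite{Li2020fed} specialized to the present notation (with $w_i$ playing the role of the client weights and full participation in the virtual average $\boldsymbol{\psi}_t$), I would also simply cite that result and indicate how the hypotheses A~\ref{ass:1}, A~\ref{ass:2} match, noting that Assumption A~\ref{ass:3} is not needed here but enters later when bounding $\sum_i w_i\|\boldsymbol{\psi}_t - \vtheta_{i,t}\|^2$ (the "client drift" term) over the $m$ steps since the last synchronization.
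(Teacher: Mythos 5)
Your overall architecture is exactly the paper's (which itself follows Lemma 1 of \cite{Li2020fed}): expand $\|\boldsymbol{\psi}_{t+1}-\vtheta^*\|^2$ with $\boldsymbol{\psi}_{t+1}=\boldsymbol{\psi}_t-\eta_t\vg_t$, bound $\|\vg_t\|^2$ by convexity of $\|\cdot\|^2$ plus the smoothness inequality $\|\nabla\gL_i(\pi_{\vtheta_{i,t}})\|^2\le 2L(\gL_i(\pi_{\vtheta_{i,t}})-\gL_i(\pi_{\vtheta_i^*}))$, split the cross term through $\vtheta_{i,t}$, use AM--GM on the drift part and strong convexity plus Jensen on the other part. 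Up to there your sketch matches the paper step for step.

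The genuine gap is in what you call the crux. After combining, the leftover function-value expression is $4\eta_t^2L\sum_i w_i(\gL_i(\pi_{\vtheta_{i,t}})-\gL_i(\pi_{\vtheta_i^*}))-2\eta_t\sum_i w_i(\gL_i(\pi_{\vtheta_{i,t}})-\gL_i(\pi_{\vtheta^*}))$. Your proposal — rewrite it as a nonpositive multiple of the per-objective suboptimality $\gL_i(\pi_{\vtheta_{i,t}})-\gL_i(\pi_{\vtheta_i^*})$ (dropped) plus ``a multiple of $\Delta^*$'' — does work algebraically, but the multiple you get is $2\eta_t\Delta^*$, i.e.\ an $O(\eta_t)$ term, not the claimed $6L\eta_t^2\Delta^*$ (note $6L\eta_t^2\le \tfrac{3}{2}\eta_t<2\eta_t$ when $\eta_t\le\tfrac{1}{4L}$, so your bound is strictly weaker and does not establish \eqref{eq:2app}; it would also wreck the $O(1/T)$ rate in Theorem~\ref{th:1}, leaving only convergence to a $\Delta^*$-sized neighborhood). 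The missing idea is that the term $-2\eta_t(1-2\eta_tL)\sum_i w_i(\gL_i(\pi_{\vtheta_{i,t}})-\gL_i(\pi_{\vtheta^*}))$ cannot simply be discarded (the sum can be negative, since $\vtheta^*$ does not minimize each $\gL_i$) but must be kept and lower-bounded by expanding around $\boldsymbol{\psi}_t$: write $\gL_i(\pi_{\vtheta_{i,t}})-\gL_i(\pi_{\vtheta^*})=(\gL_i(\pi_{\vtheta_{i,t}})-\gL_i(\pi_{\boldsymbol{\psi}_t}))+(\gL_i(\pi_{\boldsymbol{\psi}_t})-\gL_i(\pi_{\vtheta^*}))$, use convexity for the first piece, AM--GM plus smoothness again, and the fact that the weighted sum of the second piece is $\gL(\pi_{\boldsymbol{\psi}_t})-\gL(\pi_{\vtheta^*})\ge 0$ (here $\vtheta^*$ \emph{is} the minimizer of the weighted loss, with $\eta_tL-1\le-\tfrac34$ making its coefficient nonpositive). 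This trade produces a second drift term (hence the total coefficient $2$ on $\sum_i w_i\|\boldsymbol{\psi}_t-\vtheta_{i,t}\|^2$) and a second $2L\eta_t^2\Delta^*$, which together with the earlier $4L\eta_t^2\Delta^*$ gives the stated $6L\eta_t^2\Delta^*$. Your fallback of citing \cite{Li2020fed} is consistent with the paper's own attribution, but as a self-contained argument your sketch as written does not reach the lemma's constants or its $\eta_t^2$ scaling.
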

\begin{proof}
Let $\vg_t$ be defined as $\vg_t := \sum_{i=1}^{N}{w_i \nabla \gL_i(\pi_{\vtheta_{i,t}})}$. It can be inferred that $\boldsymbol{\psi}_{t+1} = \boldsymbol{\psi}_t - \eta_t \vg_t$. Therefore, we may write
\begin{align}
    \|\boldsymbol{\psi}_{t+1} - \vtheta^*\|^2 = \|\boldsymbol{\psi}_t - \eta_t \vg_t - \vtheta^*\|^2 = \|\boldsymbol{\psi}_t - \vtheta^*\|^2 -2\eta_t \vg_t^\top (\boldsymbol{\psi}_t-\vtheta^*) + \eta_t^2 \|\vg_t\|^2. \label{eq:2appcr}
\end{align}
From $L$-smoothness of $\gL_i$ stated in assumption A~\ref{ass:1}, it follows that
\begin{align}
    \|\nabla \gL_i(\pi_{\vtheta_{i,t}})\|^2 \le 2L\left(\gL_i(\pi_{\vtheta_{i,t}})-\gL_i(\pi_{\vtheta_i^*})\right) \label{eq:6apcr}
\end{align}
where $\vtheta_i^* = \argmin_{\vtheta} \gL_i(\pi_{\vtheta})$. Using the above inequality and due to the convexity of $\|\cdot\|^2$, we can write
\begin{align}
\eta_t^2 \|\vg_t\|^2 \le \eta_t^2 \sum_{i=1}^{N}{w_i \|\nabla \gL_i(\pi_{\vtheta_{i,t}})\|^2} \le 2\eta_t^2 L \sum_{i=1}^{N}{w_i \left(\gL_i(\pi_{\vtheta_{i,t}})-\gL_i(\pi_{\vtheta_i^*})\right)}. \label{eq:1apcr}
\end{align}
Furthermore, we can rewrite the term $-2\eta_t \vg_t^\top (\boldsymbol{\psi}_t-\vtheta^*)$ in \eqref{eq:2appcr} as
\begin{align}
    -2\eta_t \vg_t^\top (\boldsymbol{\psi}_t-\vtheta^*) = & -2\eta_t \sum_{i=1}^{N}{w_i\nabla \gL_i(\pi_{\vtheta_{i,t}})^\top (\boldsymbol{\psi}_t-\vtheta_{i,t})} \nonumber \\ &-2\eta_t \sum_{i=1}^{N}{w_i\nabla \gL_i(\pi_{\vtheta_{i,t}})^\top (\vtheta_{i,t} - \vtheta^*)} \label{eq:8apcr}
\end{align}
Using AM-GM inequality, we can obtain
\begin{align}
    -2 \nabla \gL_i(\pi_{\vtheta_{i,t}})^\top (\boldsymbol{\psi}_t-\vtheta_{i,t}) \le \frac{1}{\eta_t}\|\boldsymbol{\psi}_t-\vtheta_{i,t}\|^2 + \eta_t \|\nabla \gL_i(\pi_{\vtheta_{i,t}})\|^2, \label{eq:4apcr}
\end{align}
while due to $\mu$-strong convexity of $\gL_i$, we can write
\begin{align}
    -\nabla \gL_i(\pi_{\vtheta_{i,t}})^\top (\vtheta_{i,t} - \vtheta^*) \le -(\gL_i(\pi_{\vtheta_{i,t}}) - \gL_i(\pi_{\vtheta^*})) - \frac{\mu}{2} \|\vtheta_{i,t} - \vtheta^*\|^2 \label{eq:5apcr}
\end{align}
Taking a weighted average over the objectives and applying Jensen’s inequality to the right-hand side of \eqref{eq:5apcr}, we obtain
\begin{align}
    -\sum_{i=1}^{N}{w_i \nabla \gL_i(\pi_{\vtheta_{i,t}})^\top (\vtheta_{i,t} - \vtheta^*)} \le -\sum_{i=1}^{N}{w_i(\gL_i(\pi_{\vtheta_{i,t}}) - \gL_i(\pi_{\vtheta^*}))} - \frac{\mu}{2} \|\boldsymbol{\psi}_t - \vtheta^*\|^2. \label{eq:3apcr}
\end{align}
Furthermore, taking a weighted average and applying the inequality in \eqref{eq:6apcr} to \eqref{eq:4apcr}, we get
\begin{align}
    -2 \sum_{i=1}^{N}{w_i\nabla \gL_i(\pi_{\vtheta_{i,t}})^\top (\boldsymbol{\psi}_t-\vtheta_{i,t})} \le & \sum_{i=1}^{N}{\frac{w_i}{\eta_t}\|\boldsymbol{\psi}_t-\vtheta_{i,t}\|^2} \nonumber \\ & + 2\eta_t L \sum_{i=1}^{N}{w_i \left(\gL_i(\pi_{\vtheta_{i,t}})-\gL_i(\pi_{\vtheta_i^*})\right)}. \label{eq:7apcr}
\end{align}
Combining \eqref{eq:2appcr} with \eqref{eq:1apcr}, \eqref{eq:8apcr}, \eqref{eq:3apcr} and \eqref{eq:7apcr}, we arrive at
\begin{align}
    \|\boldsymbol{\psi}_{t+1} - \vtheta^*\|^2 \le & (1-\eta_t \mu) \|\boldsymbol{\psi}_t - \vtheta^*\|^2 + \sum_{i=1}^{N}{w_i\|\boldsymbol{\psi}_t-\vtheta_{i,t}\|^2} \nonumber \\ &+ 4\eta^2_t L \sum_{i=1}^{N}{w_i \left(\gL_i(\pi_{\vtheta_{i,t}})-\gL_i(\pi_{\vtheta_i^*})\right)} - 2\eta_t \sum_{i=1}^{N}{w_i(\gL_i(\pi_{\vtheta_{i,t}}) - \gL_i(\pi_{\vtheta^*}))} \label{eq:9apcr}
\end{align}
\begin{table}[t]
\centering
\caption{Notation Table.}
\renewcommand{\arraystretch}{1.2}
\begin{tabular}{|c|l|}
\hline
\textbf{Symbol} & \textbf{Description} \\ \hline
$N$ & Number of objectives \\ \hline
$M$ & Number of randomly selected objectives at each step $t$ \\ \hline
$\sS_t$ & Set of selected objectives at step $t$ \\ \hline
$w_i$ & Preference weight associated with the $i$-th objective \\ \hline
$\pi_\vtheta$ & Policy of the language model parameterized by $\vtheta$ \\ \hline
$\vtheta_{i,t}$ & Parameters of the policy associated with the $i$-th objective at optimization step $t$ \\ \hline
$\boldsymbol{\rho}_t$ & Merged parameters of all policies at step $t$, defined as in \eqref{eq:1app} \\ \hline
$\boldsymbol{\psi}_t$ & Weighted average of policy parameters, defined as $\boldsymbol{\psi}_t = \sum_{i=1}^{N} w_i \vtheta_{i,t}$ \\ \hline
$\vg_t$ & Weighted average of policy gradients, defined as $\vg_t = \sum_{i=1}^{N} w_i \nabla \gL_i(\pi_{\vtheta_{i,t}})$ \\ \hline
$\vtheta^*$ & Optimal policy parameter for the multi-objective loss, defined as in \eqref{eq:1cr} \\ \hline
$\vtheta_i^*$ & Optimal policy parameter for objective $i$, defined as in \eqref{eq:1cr} \\ \hline
\end{tabular}
\label{table:7}
\end{table}
Taking the definition of $\Delta^*$ into account, the last two terms in the right hand side of \eqref{eq:9apcr} can be rewritten as
\begin{align}
    & 4\eta^2_t L \sum_{i=1}^{N}{w_i \left(\gL_i(\pi_{\vtheta_{i,t}})-\gL_i(\pi_{\vtheta_i^*})\right)} - 2\eta_t \sum_{i=1}^{N}{w_i(\gL_i(\pi_{\vtheta_{i,t}}) - \gL_i(\pi_{\vtheta^*}))} \nonumber \\ = & -2\eta_t(1-2\eta_t L) \sum_{i=1}^{N}{w_i \left(\gL_i(\pi_{\vtheta_{i,t}})-\gL_i(\pi_{\vtheta_i^*})\right)} + 2\eta_t \sum_{i=1}^{N}{w_i(\gL_i(\pi_{\vtheta^*}) - \gL_i(\pi_{\vtheta_i^*}))} \nonumber \\ = & -2\eta_t(1-2\eta_t L) \sum_{i=1}^{N}{w_i \left(\gL_i(\pi_{\vtheta_{i,t}})-\gL_i(\pi_{\vtheta^*})\right)} + 4\eta_t^2 L \sum_{i=1}^{N}{w_i(\gL_i(\pi_{\vtheta^*}) - \gL_i(\pi_{\vtheta_i^*}))} \nonumber \\ = & -2\eta_t(1-2\eta_t L) \sum_{i=1}^{N}{w_i \left(\gL_i(\pi_{\vtheta_{i,t}})-\gL_i(\pi_{\vtheta^*})\right)} + 4\eta_t^2 L \Delta^*. \label{eq:15apcr}
\end{align}
To bound $ \sum_{i=1}^{N}{w_i \left(\gL_i(\pi_{\vtheta_{i,t}})-\gL_i(\pi_{\vtheta^*})\right)}$, considering the convexity of $\gL_i$, we can write
\begin{align}
    \sum_{i=1}^{N}{w_i \left(\gL_i(\pi_{\vtheta_{i,t}})-\gL_i(\pi_{\vtheta^*})\right)} &= \sum_{i=1}^{N}{w_i \left(\gL_i(\pi_{\vtheta_{i,t}})-\gL_i(\pi_{\boldsymbol{\psi}_t})\right)} + \sum_{i=1}^{N}{w_i \left(\gL_i(\pi_{\boldsymbol{\psi}_t})-\gL_i(\pi_{\vtheta^*})\right)} \nonumber \\ & \ge \sum_{i=1}^{N}{w_i \nabla \gL_i(\boldsymbol{\psi}_t)^\top (\vtheta_{i,t}-\boldsymbol{\psi}_t)} + \gL_i(\pi_{\boldsymbol{\psi}_t})-\gL_i(\pi_{\vtheta^*}). \label{eq:10apcr}
\end{align}
Applying AM-GM inequality to the right hand side of \eqref{eq:10apcr}, we get
\begin{align}
    \sum_{i=1}^{N}{w_i \left(\gL_i(\pi_{\vtheta_{i,t}})-\gL_i(\pi_{\vtheta^*})\right)} \ge & \sum_{i=1}^{N}{-\frac{w_i}{2}\left(\eta_t \|\nabla \gL_i(\pi_{\boldsymbol{\psi}_t})\|^2 + \frac{1}{\eta_t}\|\vtheta_{i,t}-\boldsymbol{\psi}_t\|^2 \right)} \nonumber \\ &+ \gL_i(\pi_{\boldsymbol{\psi}_t})-\gL_i(\pi_{\vtheta^*}) \label{eq:11apcr}
\end{align}
Applying the inequality in \eqref{eq:6apcr} to the right hand side of \eqref{eq:11apcr}, we conclude that
\begin{align}
    \sum_{i=1}^{N}{w_i \left(\gL_i(\pi_{\vtheta_{i,t}})-\gL_i(\pi_{\vtheta^*})\right)} \ge & - \sum_{i=1}^{N}{w_i\left(\eta_t L \left(\gL_i(\pi_{\boldsymbol{\psi}_t}) - \gL_i(\pi_{\vtheta_i^*}) \right) + \frac{1}{2\eta_t}\|\vtheta_{i,t}-\boldsymbol{\psi}_t\|^2 \right)} \nonumber \\ &+ \gL_i(\pi_{\boldsymbol{\psi}_t})-\gL_i(\pi_{\vtheta^*}). \label{eq:12apcr}
\end{align}
Due the fact that $0 \le \eta_t \le \frac{1}{4L}$, it can be concluded that $\eta_t \le 2\eta_t(1-2\eta_t L) \le 2\eta_t$. Multiplying both sides of \eqref{eq:12apcr} by $-2\eta_t(1-2\eta_t L)$, we obtain
\begin{align}
    & -2\eta_t(1-2\eta_t L) \sum_{i=1}^{N}{w_i \left(\gL_i(\pi_{\vtheta_{i,t}})-\gL_i(\pi_{\vtheta^*})\right)} \nonumber \\ \le & \sum_{i=1}^{N}{w_i\left(2\eta_t^2(1-2\eta_t L) L \left(\gL_i(\pi_{\boldsymbol{\psi}_t}) - \gL_i(\pi_{\vtheta_i^*}) \right) + (1-2\eta_t L)\|\vtheta_{i,t}-\boldsymbol{\psi}_t\|^2 \right)} \nonumber \\ &- 2\eta_t(1-2\eta_t L) \left(\gL_i(\pi_{\boldsymbol{\psi}_t})-\gL_i(\pi_{\vtheta^*})\right). \label{eq:13apcr}
\end{align}
The inequality in \eqref{eq:13apcr} can be rewritten as
\begin{align}
    & -2\eta_t(1-2\eta_t L) \sum_{i=1}^{N}{w_i \left(\gL_i(\pi_{\vtheta_{i,t}})-\gL_i(\pi_{\vtheta^*})\right)} \nonumber \\ \le & 2\eta_t^2(1-2\eta_t L) L \Delta^* + \sum_{i=1}^{N}{w_i(1-2\eta_t L)\|\vtheta_{i,t}-\boldsymbol{\psi}_t\|^2} \nonumber \\ & + (\eta_t L - 1) 2\eta_t(1-2\eta_t L) \left(\gL_i(\pi_{\boldsymbol{\psi}_t})-\gL_i(\pi_{\vtheta^*})\right). \label{eq:14apcr}
\end{align}
Using the facts that $\gL_i(\pi_{\boldsymbol{\psi}_t})-\gL_i(\pi_{\vtheta^*}) \ge 0$, $\eta_t L - 1 \le - \frac{3}{4}$ and $1-2\eta_t L \le 1$, from \eqref{eq:14apcr} we obtain
\begin{align}
    -2\eta_t(1-2\eta_t L) \sum_{i=1}^{N}{w_i \left(\gL_i(\pi_{\vtheta_{i,t}})-\gL_i(\pi_{\vtheta^*})\right)} \le 2\eta_t^2 L \Delta^* + \sum_{i=1}^{N}{w_i\|\vtheta_{i,t}-\boldsymbol{\psi}_t\|^2}. \label{eq:16apcr}
\end{align}
Combining \eqref{eq:9apcr} with \eqref{eq:15apcr} and \eqref{eq:16apcr} proves the Lemma.
\end{proof}

Since IterativeRS merges every $m$ steps, there exists $t^\prime$ such that $t-t^\prime< m$ and $\vtheta_{i,t^\prime} = \boldsymbol{\psi}_{t^\prime}$, $\forall i \in \{1,\ldots,N\}$. Considering the facts that $\boldsymbol{\psi}_{t^\prime}$ is the expected value of $\{\vtheta_{i,t^\prime}\}_{i=1}^{N}$, over distribution $\{w_i\}_{i=1}^{N}$ and $\E\|X-\E[X]\|^2 \le \|\E[X]\|^2$, we can conclude that
\begin{align}
    \sum_{i=1}^{N}{w_i \|\boldsymbol{\psi}_t - \vtheta_{i,t}\|^2} = \sum_{i=1}^{N}{w_i \|\boldsymbol{\psi}_t - \boldsymbol{\psi}_{t^\prime} + \vtheta_{i,t^\prime} - \vtheta_{i,t}\|^2} \le \sum_{i=1}^{N}{w_i \|\vtheta_{i,t^\prime} - \vtheta_{i,t}\|^2}. \label{eq:9app}
\end{align}
Assume that $\eta_t$ is selected such that it is non-increasing and satisfies $\eta_t \le 2\eta_{t+m}$, $\forall t$. Taking the assumption A~\ref{ass:3} into account, we can infer that
\begin{align}
    \sum_{i=1}^{N}{w_i \|\vtheta_{i,t^\prime} - \vtheta_{i,t}\|^2} = \sum_{i=1}^{N}{w_i \|\sum_{\tau=0}^{t-t^\prime}{\eta_{t^\prime+\tau}\nabla \gL_i(\pi_{\vtheta_{i,t^\prime}})}\|^2} \le 4(m-1)^2\eta_t^2 G^2. \label{eq:3app}
\end{align}
Combining \eqref{eq:3app} with \eqref{eq:2app}, we get
\begin{align}
    \|\boldsymbol{\psi}_{t+1} - \vtheta^*\|^2 \le (1-\eta_t \mu)\|\boldsymbol{\psi}_t-\vtheta^*\|^2 + 6L\eta_t^2 \Delta^* + 8(m-1)^2\eta_t^2 G^2. \label{eq:6app}
\end{align}
Recall that IterativeRS in \Algref{alg:1} samples $M$ objectives uniformly without replacement. Such a sampling scheme is unbiased and we can write
\begin{align}
    \E_{\sS_t}[\boldsymbol{\rho}_{t+1}] = \boldsymbol{\psi}_{t+1}
\end{align}
where $\E_{\sS_t}[\cdot]$ denote the expectation with respect to sampling randomization. Therefore, taking expectation from \eqref{eq:4app} leads to
\begin{align}
    \E_{\sS_t}[\|\boldsymbol{\rho}_{t+1} - \vtheta^*\|^2] = \E_{\sS_t}[\|\boldsymbol{\rho}_{t+1} - \boldsymbol{\psi}_{t+1}\|^2] + \|\boldsymbol{\psi}_{t+1} - \vtheta^*\|^2. \label{eq:5app}
\end{align}
Combining \eqref{eq:5app} with \eqref{eq:6app}, we get
\begin{align}
    \E_{\sS_t}[\|\boldsymbol{\rho}_{t+1} - \vtheta^*\|^2] \le &\E_{\sS_t}[\|\boldsymbol{\rho}_{t+1} - \boldsymbol{\psi}_{t+1}\|^2] \nonumber \\ &+ (1-\eta_t \mu)\|\boldsymbol{\psi}_t-\vtheta^*\|^2 + 6L\eta_t^2 \Delta^* + 8(m-1)^2\eta_t^2 G^2. \label{eq:7app}
\end{align}
According to \eqref{eq:1app}, if $(t+1) \bmod m \neq 0$, it can concluded that $\E_{\sS_t}[\|\boldsymbol{\rho}_{t+1} - \boldsymbol{\psi}_{t+1}\|^2] = 0$. If $(t+1) \bmod m = 0$, considering the assumption that $w_1=\ldots=w_N=\frac{1}{N}$, the term $\E_{\sS_t}[\|\boldsymbol{\rho}_{t+1} - \boldsymbol{\psi}_{t+1}\|^2]$ can be expressed as:
\begin{align}
    \E_{\sS_t}[\|\boldsymbol{\rho}_{t+1} - \boldsymbol{\psi}_{t+1}\|^2] = & \E_{\sS_t}\left\|\frac{1}{M}\sum_{i \in \sS_{t+1}}{\vtheta_{i,t+1}} - \boldsymbol{\psi}_{t+1}\right\|^2 \nonumber \\ = &\frac{1}{M^2} \E_{\sS_t}\left\|\sum_{i=1}^{N}{\Pr[i \in \sS_{t+1}](\vtheta_{i,t+1}-\boldsymbol{\psi}_{t+1})}\right\|^2 \nonumber \\ = & \frac{1}{M^2} \sum_{i=1}^{N}{\Pr[i \in \sS_{t+1}]\|\vtheta_{i,t+1}-\boldsymbol{\psi}_{t+1}\|^2} \nonumber \\ &+ \frac{1}{M^2}\sum_{i\neq j}{\Pr[i,j \in \sS_{t+1}](\vtheta_{i,t+1}-\boldsymbol{\psi}_{t+1})^\top(\vtheta_{j,t+1}-\boldsymbol{\psi}_{t+1})}. \label{eq:8app}
\end{align}
Considering the facts that $\Pr[i \in \sS_{t+1}] = \frac{M}{N}$ and $\Pr[i,j \in \sS_{t+1}] = \frac{M(M-1)}{N(N-1)}$ and
\begin{align}
    \|\sum_{i=1}^{N}{\vtheta_{i,t+1}-\boldsymbol{\psi}_{t+1}}\|^2 =& \sum_{i=1}^{N}{\|\vtheta_{i,t+1}-\boldsymbol{\psi}_{t+1}\|^2} \nonumber \\ &+ \sum_{i,j \in \sS_{t+1}}{(\vtheta_{i,t+1}-\boldsymbol{\psi}_{t+1})^\top(\vtheta_{j,t+1}-\boldsymbol{\psi}_{t+1})} = 0,
\end{align}
we can rewrite \eqref{eq:8app} as
\begin{align}
    \E_{\sS_t}[\|\boldsymbol{\rho}_{t+1} - \boldsymbol{\psi}_{t+1}\|^2] = \frac{1}{M(N-1)}\left(1-\frac{M}{N}\right)\sum_{i=1}^{N}{\|\vtheta_{i,t+1}-\boldsymbol{\psi}_{t+1}\|^2} \label{eq:11app}
\end{align}
Using \eqref{eq:9app} and \eqref{eq:3app}, from \eqref{eq:11app} we arrive at
\begin{align}
    \E_{\sS_t}[\|\boldsymbol{\rho}_{t+1} - \boldsymbol{\psi}_{t+1}\|^2] \le \frac{4N}{M(N-1)}\left(1-\frac{M}{N}\right) \eta_t^2m^2G^2. \label{eq:12app}
\end{align}
Combining \eqref{eq:12app} with \eqref{eq:7app}, we get
\begin{align}
    \E_{\sS_t}[\|\boldsymbol{\rho}_{t+1} - \vtheta^*\|^2] \le& (1-\eta_t \mu)\|\boldsymbol{\psi}_t-\vtheta^*\|^2 + 6L\eta_t^2 \Delta^* \nonumber \\ &+ 4\left(2(m-1)^2+ \frac{N-M}{M(N-1)}m^2\right)\eta_t^2 G^2
\end{align}
Define $B$ as
\begin{align}
    B = 6L \Delta^* + 4\left(2(m-1)^2+ \frac{N-M}{M(N-1)}m^2\right) G^2.
\end{align}
With a step size chosen as $\eta_t = \frac{\beta}{t+\gamma}$ for some $\beta > \frac{1}{\mu}$ and $\gamma > 0$ satisfying $\eta_1 \le \min\{\frac{1}{\mu},\frac{1}{4L}\}$ and $\eta_t \le 2\eta_{t+m}$, using induction it can be proved that $\E_{\sS_t}[\|\boldsymbol{\rho}_t - \vtheta^*\|^2] \le \frac{v}{\gamma + t}$ where
\begin{align}
    v = \max\left\{\frac{\beta^2 B}{\beta \mu -1}, (\gamma+1)\|\boldsymbol{\psi}_1 - \vtheta^*\|^2\right\}.
\end{align}
Since $\boldsymbol{\rho}_t$ is an unbiased estimator of $\boldsymbol{\psi}_t$, we can conclude that $\E_{\sS_t}[\|\boldsymbol{\rho}_t - \vtheta^*\|^2] = \|\boldsymbol{\psi}_t - \vtheta^*\|^2$. Definition of $v$ ensures that $\|\boldsymbol{\psi}_t - \vtheta^*\|^2 \le \frac{v}{\gamma + t}$ for $t=1$. Assume that $\|\boldsymbol{\psi}_t - \vtheta^*\|^2 \le \frac{v}{\gamma + t}$ holds for $t$. Using \eqref{eq:6app}, we can write
\begin{align}
    \|\boldsymbol{\psi}_{t+1} - \vtheta^*\|^2 & \le (1-\eta_t \mu)\|\boldsymbol{\psi}_t - \vtheta^*\|^2 + \eta_t^2 B \nonumber \\ & \le \left(1-\frac{\beta\mu}{t+\gamma}\right)\frac{v}{t+\gamma} + \frac{\beta^2 B}{(t+\gamma)^2} \nonumber \\ & = \frac{t+\gamma-1}{(t+\gamma)^2}v + \left[\frac{\beta^2 B}{(t+\gamma)^2}-\frac{\beta\mu -1}{(t+\gamma)^2}v\right] \le \frac{v}{t+\gamma+1} \label{eq:14app}
\end{align}
which proves that $\E_{\sS_t}[\|\boldsymbol{\rho}_{t+1} - \vtheta^*\|^2] \le \frac{v}{\gamma + t+1}$ holds. Choosing $\beta = \frac{2}{\mu}$ and $\gamma = \max\{\frac{8L}{\mu},m\}-1$, we have $\eta_t = \frac{2}{\mu(\gamma + t)}$. It can be verified that in this case $\eta_t \le 2\eta_{t+m}$. Then, we can write
\begin{align}
    v = \max\left\{\frac{\beta^2 B}{\beta \mu -1}, (\gamma+1)\|\boldsymbol{\psi}_1 - \vtheta^*\|^2\right\} &\le \frac{\beta^2 B}{\beta \mu -1} + (\gamma+1)\|\boldsymbol{\psi}_1 - \vtheta^*\|^2 \nonumber \\ &\le \frac{4B}{\mu^2} + (\gamma+1)\|\boldsymbol{\psi}_1 - \vtheta^*\|^2. \label{eq:13app}
\end{align}
Combining \eqref{eq:13app} with \eqref{eq:14app} and the fact that $\E_{\sS_t}[\boldsymbol{\rho}_t] = \boldsymbol{\psi}_t$, we get
\begin{align}
    \E_{\sS_t}[\|\boldsymbol{\rho}_{t+1} - \vtheta^*\|^2] = \|\boldsymbol{\psi}_{t+1} - \vtheta^*\|^2 \le \frac{1}{t+\gamma+1}\left( \frac{4B}{\mu^2} + (\gamma+1)\|\boldsymbol{\psi}_1 - \vtheta^*\|^2 \right) \label{eq:16app}
\end{align}
According to smoothness assumption in A~\ref{ass:1}, we can write
\begin{align}
    \E[\gL(\pi_{\boldsymbol{\rho}_t})] - \gL(\pi_{\vtheta^*}) \le \frac{L}{2}\E_{\sS_t}[\|\boldsymbol{\rho}_{t+1} - \vtheta^*\|^2] \label{eq:15app}
\end{align}
Combining \eqref{eq:15app} with \eqref{eq:16app}, we arrive at
\begin{align}
    \E[\gL(\pi_{\boldsymbol{\rho}_t})] - \gL(\pi_{\vtheta^*}) \le \frac{L}{2(t+\gamma)}\left( \frac{4B}{\mu^2} + (\gamma+1)\|\boldsymbol{\psi}_1 - \vtheta^*\|^2 \right). \label{eq:17app}
\end{align}
Plugging in $\boldsymbol{\psi}_1 = \vtheta_{\text{ref}}$ in \eqref{eq:17app} proves the Theorem. Note that for the ease of notation we drop the expectation in Theorem \ref{th:1}. Furthermore, it should be noted that most of proof steps are taken from \cite{Li2020fed}. Furthrmore, it is useful to mention that Theorem \ref{th:1} is proved for the case where $w_1=\ldots=w_N=\frac{1}{N}$. However, extension to non-uniform cases is straightforward. Define the scaled loss for the objective $i$ as $\Tilde{\gL}_i(\pi_\vtheta) = w_i N \gL(\pi_{\vtheta})$. Then it can be concluded that $\gL(\pi_\vtheta) = \frac{1}{N}\sum_{i=1}^{N}{\Tilde{\gL}_i(\pi_\vtheta)}$. Therefore, in that case the proof can be applied to scaled losses.

\section{Supplementary Experimental Results and Details} \label{supp_exp}
This appendix provides supplementary experimental results and implementation details.

\subsection{Implementation Details for Small Molecule Generation} \label{supp_mol_exp}
To fine-tune the MolGPT-2 model using MORLHF, RS, and IterativeRS, we employed PPO from the TRL library. For each objective, a reward model was trained using the labeled QM9 dataset. Each reward model consists of a MolGPT-2 backbone with a three-layer MLP head; only the MLP head was trained. The dataset was split into 80\% training, 10\% validation, and 10\% test sets. All models were fine-tuned with a learning rate of $1.41 \times 10^{-5}$ using the Adam optimizer and a batch size of 128. The RiC baseline was configured with the same hyperparameters and settings as MORLHF, RS, and IterativeRS. We set $p=2$ for RiC. Model training was conducted using four V100 GPUs. To perform merging for IterativeRS and RS, we average all objective-specific model weights. For IterativeRS the number of selected objectives was 3.

\subsection{Implementation Details for DNA Sequence Generation} \label{supp_dna_exp}
Similar to the molecule generation task, we fine-tuned the DNAGPT-2 model using MORLHF, RS, and IterativeRS with PPO from the TRL library. A subset of 100,000 samples was uniformly sampled from the MPRA dataset and evaluated using the Malinois model to obtain activity scores across three cell lines. This subset was used as the labeled dataset to train the reward models for PPO. For each objective, a separate reward model was trained using this labeled data. Each reward model consists of a DNAGPT-2 backbone with a three-layer MLP head, where only the MLP head was trained. The dataset was split into 70\% training, 10\% validation, and 20\% test sets. All models were fine-tuned with a learning rate of $1.41 \times 10^{-5}$ using the Adam optimizer and a batch size of 128. The RiC baseline used the same hyperparameters and settings as MORLHF, RS, and IterativeRS. We set $p=2$ for RiC. Model training was performed on four V100 GPUs. For IterativeRS the number of selected objectives was 3.

\subsection{Implementation Details for Text Summarization} \label{supp_text_exp}
To fine-tune the Llama-3.2-3B-Instruct model using MORLHF, RS, and IterativeRS, we employed PPO from the TRL library. We first passed all prompt–response pairs from the Reddit dataset through three oracle reward models to construct a multi-labeled dataset. For each objective, a proxy reward model was trained using the dataset and the corresponding objective-specific labels. Each proxy model consists of a Llama-3.2-3B-Instruct backbone with a two-layer MLP head, with only the MLP head being trained.
We used the training set from the Reddit dataset for training and randomly split its validation set into two subsets to serve as validation and test sets. The validation set was used both for training the proxy reward models and for supervised fine-tuning in RiC. During inference, prompts from the test set were provided to the fine-tuned models to generate text summaries. The maximum summary length was set to 32.

Before applying PPO fine-tuning, we first trained an SFT model. The PPO fine-tuning was then performed using this SFT model. We observed that initializing PPO with an SFT model leads to improved ROUGE scores in the generated summaries. To construct the SFT training dataset, for each prompt in the training set, we selected the summary that outperformed the alternative in the majority of objectives based on reward scores. SFT was performed for two epochs with a learning rate of $1.41 \times 10^{-6}$ using this constructed dataset.
For fine-tuning with PPO, we selected a random rollout of 1,024 samples per epoch and used a batch size of 128. Each model was fine-tuned for 20 epochs using a learning rate of $1.41 \times 10^{-6}$ and the Adam optimizer. The RiC baseline was fine-tuned on the entire training set for 2 epochs, using the same learning rate and batch size. We set $p=2$ for RiC. All other hyperparameters were kept consistent across MORLHF, RS, and IterativeRS. Training was conducted on four A100 GPUs. For IterativeRS, the number of selected objectives was set to 3.
For both RS and IterativeRS, model merging was performed by selecting from seven candidate merged models obtained using seven different sets of merging weights. The model with the highest average reward, as evaluated by the reward models, was selected. The seven sets of merging weights consisted of $[1/3,1/3,1/3]$, all permutations of $[1/6,1/6,2/3]$, and all permutations of $[1/6,5/12,5/12]$. During training, at each merging step, we computed the reward of each merged model on 256 samples from the training data and selected the model with the highest average reward for the next iteration. After training, to obtain the final merged model, we evaluated all seven merged models on 1,024 validation samples and selected the one with the highest average reward. We then assessed its performance on the test set. Note that RS does not perform merging during training.

\subsection{Supplementary Results}

We performed additional experiments on the DNA sequence generation task to evaluate the performance of RL-based fine-tuning methods MORLHF, RS, and IterativeRS, using RLOO, which does not rely on a value model. The results are presented in Table \ref{table:4}. The results show that IterativeRS achieves a higher average reward than both MORLHF and RS when using RLOO.
 \begin{table}[t]
\centering
\caption{Average performance of Pareto-front DNA sequences generated by multi-objective approaches using RLOO.}
\begin{tabular}{l||c|c|c|c|c}
\toprule
 & \textbf{K562} & \textbf{HepG2} & \textbf{SKNSH} & \textbf{Avg Reward} & \textbf{ICV} \\ \hline
MORLHF & 0.3754 & 0.6747 & 0.6882 & 0.5794 & 3.5907 \\
RS & 0.4080 & 0.6571 & 0.6786 & 0.5812 & 5.7214 \\ \hline
IterativeRS & 0.3559 & 0.6860 & 0.7061 & 0.5826 & 3.9890 \\
\bottomrule
\end{tabular}
\label{table:4}
\end{table}

To investigate the influence of merging on the performance of IterativeRS, we conducted supplementary experiments. For DNA sequence generation task, we considered ten different sets of merging weights, including $[1/3, 1/3, 1/3]$, permutations of $[1/6, 1/6, 2/3]$, permutations of $[1/6, 5/12, 5/12]$ and permutations of $[1/2, 1/4, 1/4]$. To assess the performance of each merged model, we generated $8,192$ samples per model and identified the Pareto-optimal sequences based on scores from reward models. The final selection was based on the model that achieved the highest average reward score on its Pareto optimal sequences. This method is referred to as \emph{selective merging}, whereas merging with uniform weights is referred to as \emph{fixed merging}. Table \ref{table:5} presents the results obtained using RLOO. The results suggest that selective merging offers no improvement over fixed merging. One possible reason for the limited effectiveness in the DNA sequence generation task is the evaluation method, which relies on Pareto-optimal samples generated by the model. This makes identifying the best-performing model more challenging. During training, merging allows experts to transfer cross-task knowledge, which can help the final merged model generate higher-quality sequences. However, selecting the best merged model among candidates is difficult because it depends on evaluating the Pareto-optimality of generated sequences using reward models. These reward models are trained on limited data derived from an oracle model used during evaluation, leading to a performance gap between the reward models and the oracle. As a result, assessing Pareto-optimality using these reward models may not yield reliable outcomes.
 \begin{table}[t]
\centering
\caption{Comparison of selective merging and fixed merging on the performance of IterativeRS on DNA sequence generation task.}
\begin{tabular}{l||c|c|c|c}
\toprule
 & \textbf{K562} & \textbf{HepG2} & \textbf{SKNSH} & \textbf{Avg Reward} \\ \hline
fixed merging & 0.3559 & 0.6860 & 0.7061 & 0.5826 \\
selective merging & 0.3678 & 0.6778 & 0.6923 & 0.5793 \\
\bottomrule
\end{tabular}
\label{table:5}
\end{table}

To examine the effect of the merging strategy on both RS and IterativeRS in the molecule generation task, we applied MolMoE \cite{Calanzone2025} to each method. MolMoE is an expert merging method designed for molecular applications, whereas IterativeRS focuses on expert training. Therefore, these two methods can be used in conjunction. We applied MolMoE to both IterativeRS and RS, and the results are presented in Table \ref{table:6}. As shown, IterativeRS with MolMoE outperforms RS with MolMoE across all objectives. Comparing Table \ref{table:1} and Table \ref{table:6}, we observe that incorporating MolMoE improves the performance of RS across all objectives. While IterativeRS with MolMoE achieves nearly the same average reward as IterativeRS without MolMoE, it yields an $11\%$ improvement in ICV. It is worth noting that one of the main advantages of using MolMoE is its ability to handle scenarios where preferences over objectives change dynamically over time, which is outside the scope of this paper's experimental study.
\begin{table}[t]
\centering
\caption{Average performance of Pareto-optimal molecules generated by RS and IterativeRS employing MolMoE for merging.}
\begin{tabular}{l||c|c|c|c|c}
\toprule
 & $\boldsymbol{\alpha}$ \textbf{energy} & \textbf{gap} & $\boldsymbol{U_0}$ \textbf{energy} & \textbf{Avg Reward} & \textbf{ICV} \\ \hline
RS+MolMoE & 1.4499 & 0.9715 & 1.5988 & 1.3400 & 4.1120 \\
IterativeRS+MolMoE & 1.5651 & 0.9941 & 1.6420 & 1.4004 & 3.9938 \\
\bottomrule
\end{tabular}
\label{table:6}
\end{table}

\section{Supplementary Related Works} \label{supp:rel}
\textbf{Federated Learning.} Federated learning involves a group of users, called clients, who collaborate with each other through communication with a central server to train a global model \cite{McMahan2017,Haddadpour2021,Zeng2023}. There is an analogy between federated learning and multi-objective reinforcement learning in the context of foundation model fine-tuning. In federated learning, the clients and the server work together to train a model that performs optimally across all clients' data. However, this can be challenging since the data may be distributed non-i.i.d. among clients, which similar to multi-objective reinforcement learning can lead to conflicting objectives during model training. To address this issue, several personalized federated learning algorithms have been proposed in the literature \cite{Smith2017,Fallah2020,Collins2021,Li2021c,Marfoq2021,Chen2022Per,Zhang2023Fed,Ghari2024}.

\textbf{GFlowNet.} GFlowNets, initially proposed by \cite{Bengio2021}, were introduced as a generative reinforcement learning framework designed to effectively handle scenarios with multiple paths leading to a common state. They have been widely applied to biological sequence \cite{Jain2022,Koziarski2024,Ghari2024gflownet} and molecule design \cite{Zhu2023gfn,Koziarski2024mol,Kim2024gfn} tasks, where their effectiveness has been well documented. In this paper, we focus on policy gradient–based methods such as PPO, due to their computational efficiency for foundation model fine-tuning. It is also worth noting that several methods have been proposed in the literature to improve the learning efficiency of GFlowNets \cite{Bengio2023,malkin2023gflownets,madan2023learning,Shen2023gfn}. Furthermore, GFlowNets have recently been utilized to enhance the reasoning capabilities of large language models and vision-language models \cite{Kang2025,Yu2025gfn,Younsi2025}.

\section{Societal Impact} \label{soc}
In this paper, we addressed the problem of fine-tuning large language models (LLMs) on multiple objectives—a challenge with significant implications in areas such as small molecule design for drug discovery and biological sequence design. Methods that enable LLMs to generate molecules or biological sequences with desirable functionalities hold great promise for accelerating the discovery of new drugs and therapeutics, potentially benefiting society at large.
However, we recognize the dual-use nature of this research. There is also the risk that such technologies could be misused or exacerbate existing health disparities, particularly among marginalized communities. As researchers, we underscore the importance of carefully considering both the societal benefits and the potential unintended consequences of this work. We remain optimistic that the broader impact of our contributions will lean toward positive, equitable outcomes.


\newpage
\section*{NeurIPS Paper Checklist}

\begin{enumerate}

\item {\bf Claims}
    \item[] Question: Do the main claims made in the abstract and introduction accurately reflect the paper's contributions and scope?
    \item[] Answer: \answerYes{} 
    \item[] Justification: {We clearly explained the main contributions in both introduction and abstract.}
    \item[] Guidelines:
    \begin{itemize}
        \item The answer NA means that the abstract and introduction do not include the claims made in the paper.
        \item The abstract and/or introduction should clearly state the claims made, including the contributions made in the paper and important assumptions and limitations. A No or NA answer to this question will not be perceived well by the reviewers. 
        \item The claims made should match theoretical and experimental results, and reflect how much the results can be expected to generalize to other settings. 
        \item It is fine to include aspirational goals as motivation as long as it is clear that these goals are not attained by the paper. 
    \end{itemize}

\item {\bf Limitations}
    \item[] Question: Does the paper discuss the limitations of the work performed by the authors?
    \item[] Answer: \answerYes{} 
    \item[] Justification: {In the experimental section, we reported that some baselines perform better than ours in certain aspects.}
    \item[] Guidelines:
    \begin{itemize}
        \item The answer NA means that the paper has no limitation while the answer No means that the paper has limitations, but those are not discussed in the paper. 
        \item The authors are encouraged to create a separate "Limitations" section in their paper.
        \item The paper should point out any strong assumptions and how robust the results are to violations of these assumptions (e.g., independence assumptions, noiseless settings, model well-specification, asymptotic approximations only holding locally). The authors should reflect on how these assumptions might be violated in practice and what the implications would be.
        \item The authors should reflect on the scope of the claims made, e.g., if the approach was only tested on a few datasets or with a few runs. In general, empirical results often depend on implicit assumptions, which should be articulated.
        \item The authors should reflect on the factors that influence the performance of the approach. For example, a facial recognition algorithm may perform poorly when image resolution is low or images are taken in low lighting. Or a speech-to-text system might not be used reliably to provide closed captions for online lectures because it fails to handle technical jargon.
        \item The authors should discuss the computational efficiency of the proposed algorithms and how they scale with dataset size.
        \item If applicable, the authors should discuss possible limitations of their approach to address problems of privacy and fairness.
        \item While the authors might fear that complete honesty about limitations might be used by reviewers as grounds for rejection, a worse outcome might be that reviewers discover limitations that aren't acknowledged in the paper. The authors should use their best judgment and recognize that individual actions in favor of transparency play an important role in developing norms that preserve the integrity of the community. Reviewers will be specifically instructed to not penalize honesty concerning limitations.
    \end{itemize}

\item {\bf Theory assumptions and proofs}
    \item[] Question: For each theoretical result, does the paper provide the full set of assumptions and a complete (and correct) proof?
    \item[] Answer: \answerYes{} 
    \item[] Justification: {In section \ref{sec:ana}, we provide the accurate definition of assumptions. We provide the complete proof for the Theorem in Appendix \ref{proof}.}
    \item[] Guidelines:
    \begin{itemize}
        \item The answer NA means that the paper does not include theoretical results. 
        \item All the theorems, formulas, and proofs in the paper should be numbered and cross-referenced.
        \item All assumptions should be clearly stated or referenced in the statement of any theorems.
        \item The proofs can either appear in the main paper or the supplemental material, but if they appear in the supplemental material, the authors are encouraged to provide a short proof sketch to provide intuition. 
        \item Inversely, any informal proof provided in the core of the paper should be complemented by formal proofs provided in appendix or supplemental material.
        \item Theorems and Lemmas that the proof relies upon should be properly referenced. 
    \end{itemize}

    \item {\bf Experimental result reproducibility}
    \item[] Question: Does the paper fully disclose all the information needed to reproduce the main experimental results of the paper to the extent that it affects the main claims and/or conclusions of the paper (regardless of whether the code and data are provided or not)?
    \item[] Answer: \answerYes{} 
    \item[] Justification: {We provided the full implementation details in both the paper and Appendix \ref{supp_exp}.}
    \item[] Guidelines:
    \begin{itemize}
        \item The answer NA means that the paper does not include experiments.
        \item If the paper includes experiments, a No answer to this question will not be perceived well by the reviewers: Making the paper reproducible is important, regardless of whether the code and data are provided or not.
        \item If the contribution is a dataset and/or model, the authors should describe the steps taken to make their results reproducible or verifiable. 
        \item Depending on the contribution, reproducibility can be accomplished in various ways. For example, if the contribution is a novel architecture, describing the architecture fully might suffice, or if the contribution is a specific model and empirical evaluation, it may be necessary to either make it possible for others to replicate the model with the same dataset, or provide access to the model. In general. releasing code and data is often one good way to accomplish this, but reproducibility can also be provided via detailed instructions for how to replicate the results, access to a hosted model (e.g., in the case of a large language model), releasing of a model checkpoint, or other means that are appropriate to the research performed.
        \item While NeurIPS does not require releasing code, the conference does require all submissions to provide some reasonable avenue for reproducibility, which may depend on the nature of the contribution. For example
        \begin{enumerate}
            \item If the contribution is primarily a new algorithm, the paper should make it clear how to reproduce that algorithm.
            \item If the contribution is primarily a new model architecture, the paper should describe the architecture clearly and fully.
            \item If the contribution is a new model (e.g., a large language model), then there should either be a way to access this model for reproducing the results or a way to reproduce the model (e.g., with an open-source dataset or instructions for how to construct the dataset).
            \item We recognize that reproducibility may be tricky in some cases, in which case authors are welcome to describe the particular way they provide for reproducibility. In the case of closed-source models, it may be that access to the model is limited in some way (e.g., to registered users), but it should be possible for other researchers to have some path to reproducing or verifying the results.
        \end{enumerate}
    \end{itemize}

\item {\bf Open access to data and code}
    \item[] Question: Does the paper provide open access to the data and code, with sufficient instructions to faithfully reproduce the main experimental results, as described in supplemental material?
    \item[] Answer: \answerYes{} 
    \item[] Justification: {The datasets are publicly available, and we provide the code used to generate the experimental results in the corresponding GitHub repository of the paper.}
    \item[] Guidelines:
    \begin{itemize}
        \item The answer NA means that paper does not include experiments requiring code.
        \item Please see the NeurIPS code and data submission guidelines (\url{https://nips.cc/public/guides/CodeSubmissionPolicy}) for more details.
        \item While we encourage the release of code and data, we understand that this might not be possible, so “No” is an acceptable answer. Papers cannot be rejected simply for not including code, unless this is central to the contribution (e.g., for a new open-source benchmark).
        \item The instructions should contain the exact command and environment needed to run to reproduce the results. See the NeurIPS code and data submission guidelines (\url{https://nips.cc/public/guides/CodeSubmissionPolicy}) for more details.
        \item The authors should provide instructions on data access and preparation, including how to access the raw data, preprocessed data, intermediate data, and generated data, etc.
        \item The authors should provide scripts to reproduce all experimental results for the new proposed method and baselines. If only a subset of experiments are reproducible, they should state which ones are omitted from the script and why.
        \item At submission time, to preserve anonymity, the authors should release anonymized versions (if applicable).
        \item Providing as much information as possible in supplemental material (appended to the paper) is recommended, but including URLs to data and code is permitted.
    \end{itemize}

\item {\bf Experimental setting/details}
    \item[] Question: Does the paper specify all the training and test details (e.g., data splits, hyperparameters, how they were chosen, type of optimizer, etc.) necessary to understand the results?
    \item[] Answer: \answerYes{} 
    \item[] Justification: {In Appendix \ref{supp_exp}, we provided the implementation details including training and test details.}
    \item[] Guidelines:
    \begin{itemize}
        \item The answer NA means that the paper does not include experiments.
        \item The experimental setting should be presented in the core of the paper to a level of detail that is necessary to appreciate the results and make sense of them.
        \item The full details can be provided either with the code, in appendix, or as supplemental material.
    \end{itemize}

\item {\bf Experiment statistical significance}
    \item[] Question: Does the paper report error bars suitably and correctly defined or other appropriate information about the statistical significance of the experiments?
    \item[] Answer: \answerYes{} 
    \item[] Justification: {We report the inverse coefficient of variation (ICV) scores in experiments section to provide information about statistical significance.}
    \item[] Guidelines:
    \begin{itemize}
        \item The answer NA means that the paper does not include experiments.
        \item The authors should answer "Yes" if the results are accompanied by error bars, confidence intervals, or statistical significance tests, at least for the experiments that support the main claims of the paper.
        \item The factors of variability that the error bars are capturing should be clearly stated (for example, train/test split, initialization, random drawing of some parameter, or overall run with given experimental conditions).
        \item The method for calculating the error bars should be explained (closed form formula, call to a library function, bootstrap, etc.)
        \item The assumptions made should be given (e.g., Normally distributed errors).
        \item It should be clear whether the error bar is the standard deviation or the standard error of the mean.
        \item It is OK to report 1-sigma error bars, but one should state it. The authors should preferably report a 2-sigma error bar than state that they have a 96\% CI, if the hypothesis of Normality of errors is not verified.
        \item For asymmetric distributions, the authors should be careful not to show in tables or figures symmetric error bars that would yield results that are out of range (e.g. negative error rates).
        \item If error bars are reported in tables or plots, The authors should explain in the text how they were calculated and reference the corresponding figures or tables in the text.
    \end{itemize}

\item {\bf Experiments compute resources}
    \item[] Question: For each experiment, does the paper provide sufficient information on the computer resources (type of compute workers, memory, time of execution) needed to reproduce the experiments?
    \item[] Answer: \answerYes{} 
    \item[] Justification: {In Appendix \ref{supp_exp} we explained our computer resources.}
    \item[] Guidelines:
    \begin{itemize}
        \item The answer NA means that the paper does not include experiments.
        \item The paper should indicate the type of compute workers CPU or GPU, internal cluster, or cloud provider, including relevant memory and storage.
        \item The paper should provide the amount of compute required for each of the individual experimental runs as well as estimate the total compute. 
        \item The paper should disclose whether the full research project required more compute than the experiments reported in the paper (e.g., preliminary or failed experiments that didn't make it into the paper). 
    \end{itemize}
    
\item {\bf Code of ethics}
    \item[] Question: Does the research conducted in the paper conform, in every respect, with the NeurIPS Code of Ethics \url{https://neurips.cc/public/EthicsGuidelines}?
    \item[] Answer: \answerYes{} 
    \item[] Justification: {The research conducted in this paper conform, in every respect, with the NeurIPS Code of Ethics}
    \item[] Guidelines:
    \begin{itemize}
        \item The answer NA means that the authors have not reviewed the NeurIPS Code of Ethics.
        \item If the authors answer No, they should explain the special circumstances that require a deviation from the Code of Ethics.
        \item The authors should make sure to preserve anonymity (e.g., if there is a special consideration due to laws or regulations in their jurisdiction).
    \end{itemize}

\item {\bf Broader impacts}
    \item[] Question: Does the paper discuss both potential positive societal impacts and negative societal impacts of the work performed?
    \item[] Answer: \answerYes{} 
    \item[] Justification: {We discussed societal impact in Appendix \ref{soc}.}
    \item[] Guidelines:
    \begin{itemize}
        \item The answer NA means that there is no societal impact of the work performed.
        \item If the authors answer NA or No, they should explain why their work has no societal impact or why the paper does not address societal impact.
        \item Examples of negative societal impacts include potential malicious or unintended uses (e.g., disinformation, generating fake profiles, surveillance), fairness considerations (e.g., deployment of technologies that could make decisions that unfairly impact specific groups), privacy considerations, and security considerations.
        \item The conference expects that many papers will be foundational research and not tied to particular applications, let alone deployments. However, if there is a direct path to any negative applications, the authors should point it out. For example, it is legitimate to point out that an improvement in the quality of generative models could be used to generate deepfakes for disinformation. On the other hand, it is not needed to point out that a generic algorithm for optimizing neural networks could enable people to train models that generate Deepfakes faster.
        \item The authors should consider possible harms that could arise when the technology is being used as intended and functioning correctly, harms that could arise when the technology is being used as intended but gives incorrect results, and harms following from (intentional or unintentional) misuse of the technology.
        \item If there are negative societal impacts, the authors could also discuss possible mitigation strategies (e.g., gated release of models, providing defenses in addition to attacks, mechanisms for monitoring misuse, mechanisms to monitor how a system learns from feedback over time, improving the efficiency and accessibility of ML).
    \end{itemize}
    
\item {\bf Safeguards}
    \item[] Question: Does the paper describe safeguards that have been put in place for responsible release of data or models that have a high risk for misuse (e.g., pretrained language models, image generators, or scraped datasets)?
    \item[] Answer: \answerNA{} 
    \item[] Justification: {We believe the paper pose no such risks.}
    \item[] Guidelines:
    \begin{itemize}
        \item The answer NA means that the paper poses no such risks.
        \item Released models that have a high risk for misuse or dual-use should be released with necessary safeguards to allow for controlled use of the model, for example by requiring that users adhere to usage guidelines or restrictions to access the model or implementing safety filters. 
        \item Datasets that have been scraped from the Internet could pose safety risks. The authors should describe how they avoided releasing unsafe images.
        \item We recognize that providing effective safeguards is challenging, and many papers do not require this, but we encourage authors to take this into account and make a best faith effort.
    \end{itemize}

\item {\bf Licenses for existing assets}
    \item[] Question: Are the creators or original owners of assets (e.g., code, data, models), used in the paper, properly credited and are the license and terms of use explicitly mentioned and properly respected?
    \item[] Answer: \answerYes{} 
    \item[] Justification: {We properly cited all models and datasets used by the paper in the experimental section.}
    \item[] Guidelines:
    \begin{itemize}
        \item The answer NA means that the paper does not use existing assets.
        \item The authors should cite the original paper that produced the code package or dataset.
        \item The authors should state which version of the asset is used and, if possible, include a URL.
        \item The name of the license (e.g., CC-BY 4.0) should be included for each asset.
        \item For scraped data from a particular source (e.g., website), the copyright and terms of service of that source should be provided.
        \item If assets are released, the license, copyright information, and terms of use in the package should be provided. For popular datasets, \url{paperswithcode.com/datasets} has curated licenses for some datasets. Their licensing guide can help determine the license of a dataset.
        \item For existing datasets that are re-packaged, both the original license and the license of the derived asset (if it has changed) should be provided.
        \item If this information is not available online, the authors are encouraged to reach out to the asset's creators.
    \end{itemize}

\item {\bf New assets}
    \item[] Question: Are new assets introduced in the paper well documented and is the documentation provided alongside the assets?
    \item[] Answer: \answerNA{} 
    \item[] Justification: {The paper does not release new assets.}
    \item[] Guidelines:
    \begin{itemize}
        \item The answer NA means that the paper does not release new assets.
        \item Researchers should communicate the details of the dataset/code/model as part of their submissions via structured templates. This includes details about training, license, limitations, etc. 
        \item The paper should discuss whether and how consent was obtained from people whose asset is used.
        \item At submission time, remember to anonymize your assets (if applicable). You can either create an anonymized URL or include an anonymized zip file.
    \end{itemize}

\item {\bf Crowdsourcing and research with human subjects}
    \item[] Question: For crowdsourcing experiments and research with human subjects, does the paper include the full text of instructions given to participants and screenshots, if applicable, as well as details about compensation (if any)? 
    \item[] Answer: \answerNA{} 
    \item[] Justification: {The paper does not involve crowdsourcing nor research with human subjects.}
    \item[] Guidelines:
    \begin{itemize}
        \item The answer NA means that the paper does not involve crowdsourcing nor research with human subjects.
        \item Including this information in the supplemental material is fine, but if the main contribution of the paper involves human subjects, then as much detail as possible should be included in the main paper. 
        \item According to the NeurIPS Code of Ethics, workers involved in data collection, curation, or other labor should be paid at least the minimum wage in the country of the data collector. 
    \end{itemize}

\item {\bf Institutional review board (IRB) approvals or equivalent for research with human subjects}
    \item[] Question: Does the paper describe potential risks incurred by study participants, whether such risks were disclosed to the subjects, and whether Institutional Review Board (IRB) approvals (or an equivalent approval/review based on the requirements of your country or institution) were obtained?
    \item[] Answer: \answerNA{} 
    \item[] Justification: {The paper does not involve crowdsourcing nor research with human subjects.}
    \item[] Guidelines:
    \begin{itemize}
        \item The answer NA means that the paper does not involve crowdsourcing nor research with human subjects.
        \item Depending on the country in which research is conducted, IRB approval (or equivalent) may be required for any human subjects research. If you obtained IRB approval, you should clearly state this in the paper. 
        \item We recognize that the procedures for this may vary significantly between institutions and locations, and we expect authors to adhere to the NeurIPS Code of Ethics and the guidelines for their institution. 
        \item For initial submissions, do not include any information that would break anonymity (if applicable), such as the institution conducting the review.
    \end{itemize}

\item {\bf Declaration of LLM usage}
    \item[] Question: Does the paper describe the usage of LLMs if it is an important, original, or non-standard component of the core methods in this research? Note that if the LLM is used only for writing, editing, or formatting purposes and does not impact the core methodology, scientific rigorousness, or originality of the research, declaration is not required.
    \item[] Answer: \answerNA{} 
    \item[] Justification: {The core method development in this research does not involve LLMs as any important, original, or non-standard components.}
    \item[] Guidelines:
    \begin{itemize}
        \item The answer NA means that the core method development in this research does not involve LLMs as any important, original, or non-standard components.
        \item Please refer to our LLM policy (\url{https://neurips.cc/Conferences/2025/LLM}) for what should or should not be described.
    \end{itemize}

\end{enumerate}

\end{document}